\newtheorem{theorem}{Theorem}
\newtheorem{lemma}[theorem]{Lemma}  
\title{Theoretically Guaranteed Policy Improvement\\ Distilled from Model-Based Planning}
\author{
Chuming Li$^{1,2}$\footnote{Equal contribution.} \and
Ruonan Jia$^{1,3*}$\and
Jie Liu$^{1}$\and
Yinmin Zhang$^{1,2}$\and \\
Yazhe Niu$^1$\and
Yaodong Yang$^4$ \and 
Yu Liu$^1$ \and
Wanli Ouyang$^1$
\affiliations
$^1$Shanghai Artificial Intelligence Laboratory \quad 
$^2$University of Sydney\\
$^3$Tsinghua University \quad
$^4$Peking University
\emails
\{lichuming.lcm, jiaruonan97, liuyuisanai\}@gmail.com, \{liujie, niuyazhe\}@pjlab.org.cn,\\
\{yinmin.zhang, wanli.ouyang\}@sydney.edu.au, yaodong.yang@pku.edu.cn
}
\begin{document}

\maketitle

\begin{abstract}
Model-based reinforcement learning (RL) has demonstrated remarkable successes on a range of continuous control tasks due to its high sample efficiency. 
To save the computation cost of conducting planning online, recent practices tend to distill optimized action sequences into an RL policy during the training phase.
Although the distillation can incorporate both the foresight of planning and the exploration ability of RL policies, the theoretical understanding of these methods is yet unclear.
In this paper, we extend the policy improvement step of Soft Actor-Critic (SAC) by developing an approach to distill from model-based planning to the policy. We then demonstrate that such an approach of policy improvement has a theoretical guarantee of monotonic improvement and convergence to the maximum value defined in SAC. 
We discuss effective design choices and implement our theory as a practical algorithm---\textit{\textbf{M}odel-based \textbf{P}lanning \textbf{D}istilled to \textbf{P}olicy (MPDP)}---that updates the policy jointly over multiple future time steps.
Extensive experiments show that MPDP achieves better sample efficiency and asymptotic performance than both model-free and model-based planning algorithms on six continuous control benchmark tasks in MuJoCo.
\end{abstract}

\section{Introduction}
Model-based Reinforcement Learning (RL) has achieved great success on continuous control tasks \cite{gps,svg,steve,me-trpo,gobigger}. Model-based RL algorithms learn the true dynamics by fitting a model (usually a neural network) to the repeated interactions with the environment and use the model to generate imaginary data or perform online planning, which provides better sample efficiency than model-free RL \cite{dqn,ppo,td3,gem}. 

A typical kind of model-based RL algorithm performs online planning to optimize the future action sequence over a long time horizon, i.e., model-based planning \cite{gps,pets,poplin,latco}. However, model-based planning has two weaknesses. First, it can hardly be applied in real-time, because it needs to solve an optimization problem on each time step and cannot remember the solution for reuse in the future similar states \cite{poplin}. Second, it only optimizes the maximum of the reward sum over the future states, rather than the trade-off between exploration and exploitation, which limits the ability to discover diverse states and better policies \cite{gps}. To reduce the time consumption during the application and incorporate the foresight of planning and the exploration ability of RL, some recent works distill the result of model-based planning into an RL policy \cite{gps,poplin}. Specifically, POPLIN uses the cross entropy method (CEM) \cite{cem} to optimize the action planning and uses behavior cloning to distill the planning result into the policy network. However, some essential theoretical properties of such kind of distillation are not well-understood, i.e., \textbf{(1)} whether the distilled policy achieves a higher value than the old policy; \textbf{(2)} whether the distilled policy has a guarantee of convergence to the optimal policy; \textbf{(3)} whether the distilled policy incorporates the foresight of planning and achieves a higher value than the original model-free policy update.

In this paper, we theoretically analyze the problems mentioned above. We choose Soft Actor-Critic (SAC) \cite{sac} as the RL component of our analysis due to its state-of-the-art performance in both model-free and model-based paradigms. Originally, the policy improvement of SAC is a one-step optimization. 
We first define a planning problem by extending the one-step optimization of SAC under the model-based paradigm to a multi-step optimization problem of action planning. For each state $\boldsymbol{s}_t$, the optimal planning solution returns a policy $\pi_{\boldsymbol{s}_t}^H$ defined on a horizon of states $\boldsymbol{s}_{t:t+H-1}$ starting from $\boldsymbol{s}_t$.
Then, we propose a simple approach to distill the solution of the above multi-step optimization to the policy, which is an extended form of the policy improvement of SAC. This approach reserves the returned policy $\pi_{\boldsymbol{s}_t}^H(\cdot|\boldsymbol{s}_t)$ for the first state $\boldsymbol{s}_t$ and discards the returned policy $\pi_{\boldsymbol{s}_t}^H(\cdot|\boldsymbol{s}_{t+1:H-1})$ for the future states.

\begin{table*}[ht]
  \centering
  \begin{tabular}{ccccc}
    \toprule
    Algorithms  &Ensemble Dynamics  &Multiple Horizon  &Regularization  &Planning Theorem\\ 
    \midrule
    SAC\cite{sac}    &\XSolidBrush    &\XSolidBrush    &\XSolidBrush   &\XSolidBrush\\
    MBPO\cite{mbpo}    &\CheckmarkBold    &\XSolidBrush    &\XSolidBrush   &\XSolidBrush\\
    POPLIN\cite{poplin}&\CheckmarkBold    &\CheckmarkBold  &\XSolidBrush   &\XSolidBrush\\
    M2AC\cite{m2ac}    &\CheckmarkBold    &\CheckmarkBold  &\XSolidBrush   &\XSolidBrush\\
    MPDP(our work)     &\CheckmarkBold    &\CheckmarkBold  &\CheckmarkBold     &\CheckmarkBold\\
    \bottomrule
  \end{tabular}
  \caption{Key features of different model-free and model-based algorithms.}
  \label{tab:component}
\end{table*}

Afterwards, we derive the theoretical result that the extended policy improvement is promising to achieve a higher return and lead the policy to converge to the optimal policy.
Thus the extension incorporates the farsight planning and has the potential to improve remarkably upon original one-step policy improvement.
Furthermore, to develop a practical algorithm, we discuss the solver of the defined multi-step optimization and design regularization to reduce the model error. Based on the above theory and discussion, we propose a new model-based RL algorithm, \textit{\textbf{M}odel-based \textbf{P}lanning \textbf{D}istilled to \textbf{P}olicy (MPDP)}. Compared to POPLIN, which uses behavior cloning for distillation and realizes the stochastic exploration via the CEM sampling, MPDP utilizes a distillation approach with theoretically guaranteed improvement and inherits the stochastic exploration of SAC, thus has a naturally strong ability to explore better policies.
For illustrating the effectiveness of MPDP, a thorough component comparison of relevant algorithms is given in \cref{tab:component}.

\textbf{Summary of Contributions:}
\textbf{(1)} We propose a model-based extended policy improvement method, which utilizes model-based planning to distill RL policy and model regularization to reduce the impact of model errors.
\textbf{(2)} We demonstrate that our method has a theoretical guarantee of monotonic improvement and convergence. And we theoretically analyze how the planning horizon affects policy improvement.
\textbf{(3)} Experimental results empirically show that MPDP achieves better sample efficiency and asymptotic performance than state-of-the-art model-free and model-based planning algorithms on the MuJoCo \cite{mujoco}. 
\section{Related Work}

\paragraph{Model-based Reinforcement Learning.}
Model-based reinforcement learning methods show a promising prospect for real-world decision-making problems due to their data efficiency. However, learning an accurate model is challenging, especially in complex environments. Many papers \cite{pets,me-trpo,mbpo,pdml} commonly use ensemble probabilistic networks to construct uncertainty-aware environment models.

The previously proposed model-based methods \cite{mve,steve,emc-ac,vagram} allow the model rollout to a fixed depth, and value estimations are split into a model-based reward and a model-free value. To guarantee the monotonic improvement, the recent work 
\cite{slbo} builds a lower bound of the expected reward and then maximizes the lower bound jointly over the policy and the model. Furthermore, model-based policy optimization \cite{mbpo} utilizes short model-generated rollouts to do policy improvement and evaluation, and also provides a guarantee of monotonic improvement.

Current model-based RL mainly focuses on better model usage. For example, M2AC \cite{m2ac} implements a masking mechanism based on the model’s uncertainty to decide whether its prediction should be used or not. Another line of works \cite{gps,svg} aims to exploit the differentiability of the learned model in model-based RL. Model-augmented actor-critic \cite{maac} uses the path-wise derivative of the learned model and policy across future time steps.
Our work estimates value function by utilizing the model error as regularization.

\paragraph{Model-based Planning.}
Many recent papers on deep model-based RL \cite{pets,VisualForesight,mpc} optimize the future action trajectories over a given horizon starting from the current state, which is usually referred as model-based planning.
Model predictive control \cite{mpc} is a common control approach for model-based planning. It frequently solves the action planning over a limited horizon and conducts the first action on the environment.
Random Shooting optimizes the action sequence among the randomly generated candidates to maximize the expected reward under the learned dynamic model, and PETS \cite{pets} uses the cross entropy method \cite{cem} to improve the efficiency of the random search. However, shooting methods usually rely on the local search in the action space and are not effective on high-dimension environments. 
To solve this problem,  the latest work \cite{latco} utilizes the collocation-based planning in a learned latent space.
In contrast, we extend the policy improvement step of SAC to distill from model-based planning to the policy, which reduces the cost in the deployment phase.

In addition, some recent works distill the result from model-based policy planning into an RL policy. POPLIN \cite{poplin} formulates action planning at each time step as an optimization problem w.r.t. the parameters of the policy network, and uses behavior cloning to distill the resulted action into the policy network. GPS \cite{gps2,gps} uses KL divergence to minimize the distance between the policy and the planning result. However, the essential theoretical properties of such distillation are not well-understood.
Instead, we propose an algorithm to improve the policy with the solution of model-based planning over multiple time steps, and give the theoretical guarantee of its improvement and convergence.

\paragraph{Actor-Critic Methods.}
Actor-critic algorithms are typically derived from policy iteration, which alternates between policy evaluation and policy improvement.
Deep deterministic policy gradient \cite{ddpg} is a common model-free actor-critic method, however, the critic is usually overestimated to predict Q value, which leads to the worse policy.
Moreover, twin delayed deep deterministic policy \cite{td3} mainly utilizes the clipped double Q learning to alleviate the above overestimation.
SAC \cite{sac,taec} is the SOTA algorithm of policy learning under the model-based paradigm. In the framework of SAC, the actor aims to maximize expected reward with entropy and the critic evaluates the expected cumulative reward with entropy. Due to the splendid performance of SAC, we choose it as the RL instance to prove the theoretical properties, by distilling the planning into an RL policy.
\section{Preliminaries}

\subsection{Notation}
We consider continuous control tasks which can be formulated as infinite-horizon Markov Decision Processes (MDP) $(\mathcal{S},\mathcal{A},p,r,\gamma)$, where the state space $\mathcal{S}$ and the action space $\mathcal{A}$ are both continuous. State transition $p:\mathcal{S} \times \mathcal{A} \times \mathcal{S} \rightarrow \mathbb{R}^+$ and $r:\mathcal{S} \times \mathcal{A} \rightarrow \mathbb{R}$ are the dynamics of the environment and the reward function, respectively. $\gamma$ is the discount factor. Additionally, we define $\pi(\boldsymbol{a}|\boldsymbol{s}):\mathcal{S} \times \mathcal{A} \rightarrow \mathbb{R}^+$ as the RL policy on the state $\boldsymbol{s}$, with $Q(\boldsymbol{s},\boldsymbol{a})$ and $V(\boldsymbol{s})$ as the corresponding value functions.

\subsection{Soft Actor-Critic}
Soft Actor-Critic(SAC) \cite{sac} develops a maximum entropy objective to incentivize the policy to explore more widely, which is the discounted sum of both the reward and the entropy, formalized as:
\begin{gather}
\label{eq:sac_obj}
    J_{\boldsymbol{s_t}}(\pi) = 
    \mathbb{E}_{\boldsymbol{a_t} \sim \pi}
    \left[
        \sum_{t=0}^{\infty}
            \gamma^t 
            \cdot     [r(\boldsymbol{s}_{t},\boldsymbol{a}_{t}) 
    - \alpha \cdot log\pi(\boldsymbol{a}_{t}|\boldsymbol{s}_{t})]
    \right].
\end{gather}

The coefficient $\alpha$ balances the importance of
the reward and entropy, and hence controls the exploration of the policy. we omit $\alpha$ in the rest of this paper for simplicity.
The policy evaluation of SAC is based on the maximum entropy objective, i.e., the value function $Q$ and $V$ also contain the discounted sum of the entropy over the subsequent states. The Bellman backup operator $\mathcal{T}^{\pi}$ of SAC is given by:
\begin{gather}
\label{eq:sac_evaluation}
\mathcal{T}^{\pi}Q(\boldsymbol{s}_{t},\boldsymbol{a}_{t}) 
= r(\boldsymbol{s}_{t},\boldsymbol{a}_{t}) 
+ \gamma \cdot V(\boldsymbol{s}_{t+1}), \\
V(\boldsymbol{s}_{t}) = 
\mathbb{E}_{\boldsymbol{a}_{t} \sim \pi}
    \left[
    Q(\boldsymbol{s}_{t},\boldsymbol{a}_{t}) -
    log\pi(\boldsymbol{a}_{t}|\boldsymbol{s}_{t})
    \right].
\end{gather}

In the policy improvement step of SAC, the new policy optimizes the $V(\boldsymbol{s}_{t})$ on each state $\boldsymbol{s}_{t}$:
\begin{gather}
\pi_{new}(\cdot|\boldsymbol{s}_{t}) = \arg \max_{\pi} \mathbb{E}_{\boldsymbol{a}_{t} \sim \pi}
    \left[
    Q^{\pi_{old}}(\boldsymbol{s}_{t},\boldsymbol{a}_{t}) -
    log\pi(\boldsymbol{a}_{t}|\boldsymbol{s}_{t})
    \right].
\end{gather}
We reformulate the objective as:
\begin{gather}
\begin{aligned}
\pi_{new}(\cdot|\boldsymbol{s}_{t}) = \arg \max_{\pi} \mathbb{E}_{\boldsymbol{a}_{t} \sim \pi}
    &\left[   \right.
    r(\boldsymbol{s}_{t},\boldsymbol{a}_{t}) 
 -log\pi(\boldsymbol{a}_{t}|\boldsymbol{s}_{t})
  \\
 &+ \gamma \cdot
V^{\pi_{old}}(\boldsymbol{s}_{t+1})
   \left. \right].
\label{eq:sac_improvement}
\end{aligned}
\end{gather}

This objective leads the new policy to optimize the modified reward $r(\boldsymbol{s}_{t},\boldsymbol{a}_{t}) -log\pi(\boldsymbol{a}_{t}|\boldsymbol{s}_{t})$ only on the current state $\boldsymbol{s}_{t}$ w.r.t. $\boldsymbol{a}_{t}$, with the subsequent states following the old policy $\pi_{old}$, which is myopic under the model-based paradigm, because the dynamics of the environment can be approximated by the environment model, which enables the joint optimization of actions over multiple future time-steps.

\subsection{Environment Model}
A common setting used in model-based RL is model ensemble \cite{pets,me-trpo,mbpo,slbo,m2ac}, where an ensemble of models learn the distribution of the transitions from historical interactions. Typically, the models are parametric function approximators $p_{1:K}(\cdot|\boldsymbol{s},\boldsymbol{a})$ and are trained via maximum likelihood: $\sum_{i=1}^{K}\mathbb{E}\left[log( p_i(\boldsymbol{s}_{t+1}|\boldsymbol{s}_t,\boldsymbol{a}_t)) \right]$.
\section{Distillation from Planning into Policy}
In this section, we propose an approach to distilling the solution of model-based planning into the policy, which is a multi-step extension of the original policy improvement of SAC. We will first derive this extension. Then, we will verify its theoretical properties and advantages. Finally, based on our theory, we will develop a practical reinforcement learning algorithm by discussing the essential design choices in the next section.

\subsection{Multi-step Optimization}
\label{multi_optim}
The policy improvement of SAC optimizes the trade-off between the expected cumulative reward and entropy only with regard to the action distribution on the current time-step $\boldsymbol{s}_t$, with the future states $\boldsymbol{s}_{t+1:\infty}$ following the old policy $\pi_{old}$, formalized in \cref{eq:sac_improvement}. Under the model-based paradigm, we assume that the true dynamics of the environment is accessible. Because we can always obtain a dynamic model with a lower generalization error \cite{me-trpo,mbpo}, as the training proceeds. This assumption enables us to quantify the expected future state and the according reward and entropy with regard to the future action sequence over a given horizon $H$, and derive a more foresighted optimization form than the original SAC. Specifically, we extend the one-step optimization in \cref{eq:sac_improvement} to a multi-step optimization problem of the action planning over $H$ steps based on the environment model, with the objective $J_{\boldsymbol{s}_t}^H(\pi)$ on the state $\boldsymbol{s}_t$ defined as: 

\begin{gather}
    J_{\boldsymbol{s}_t}^H(\pi) 
    =
    \mathbb{E}_{\boldsymbol{a_t} \sim \pi}
    \left[
        \sum_{i=0}^{H-1}
            \gamma^i 
            \cdot r^{\pi}(\boldsymbol{s}_{t+i},\boldsymbol{a}_{t+i})
            + V^{\pi_{old}}(\boldsymbol{s}_{t+H})
    \right], \label{eq:objective}\\
    r^{\pi}(\boldsymbol{s}_{t+i},\boldsymbol{a}_{t+i}) 
    = 
    r(\boldsymbol{s}_{t+i},\boldsymbol{a}_{t+i}) 
    - log\pi(\boldsymbol{a}_{t+i}|\boldsymbol{s}_{t+i}).
\end{gather}
Here $H$ is the planning horizon, $\pi$ is the policy only defined on $\boldsymbol{s}_t$ and its subsequent $H-1$ steps. $r^{\pi}(\boldsymbol{s},\boldsymbol{a})$ is the sum of the reward and the logarithmic likelihood, which inherits the maximum entropy objective of SAC. Specifically, when $H=1$, this objective degenerates to that of SAC. 

\subsection{Extended Policy Improvement}
The improvement property of distillation from planning into an RL policy has not been well discussed. Another work\cite{maac} proves that the solution of action planning achieves a higher value, but it does not develop a  distillation approach to obtain a policy $\pi_{new}$ with provably higher value $V^{\pi_{new}}(\boldsymbol{s}_{t})$, i.e., a policy with higher cumulative rewards.
In this section, we propose a distillation approach, also an extended form of the original policy improvement step in SAC, based on the multi-step optimization.
We will show that the proposed extended policy improvement provably achieves a new policy with a higher value than the old policy with respect to the maximum entropy target \cref{eq:sac_obj} defined in SAC.

\paragraph{Distillation.} We use $\pi_{\boldsymbol{s}_t}^H$ to denote the optimal solution of $J_{\boldsymbol{s}_t}^H(\pi)$. After the policy improvement, we define the new policy $\pi_{new}(\cdot|\boldsymbol{s}_t)$ as $\pi_{\boldsymbol{s}_t}^H(\cdot|\boldsymbol{s}_t)$, i.e., although $\pi_{\boldsymbol{s}_t}^H$ is define on $H$ steps of states $\boldsymbol{s}_{t:t+H-1}$, we only adopt the policy $\pi_{\boldsymbol{s}_t}^H(\cdot|\boldsymbol{s}_t)$ of the current state $\boldsymbol{s}_t$ and discard the policy $\pi_{\boldsymbol{s}_t}^H(\cdot|\boldsymbol{s}_{t+1:t+H-1})$ over the following states. 

\paragraph{Improvement.} We present the improvement property of this distillation in \cref{lem:improve}. Please note that \cref{lem:improve} is a more general multi-step extension of  the Lemma 2\footnote{https://arxiv.org/pdf/1801.01290.pdf}  in SAC \cite{sac}. Our result reveals that, if we optimize the policy jointly over a horizon starting from each state $\boldsymbol{s}_t$ and only adopt the optimal policy on the first state $\boldsymbol{s}_t$, the resulting new policy has a monotonic improvement. Specifically, when $H=1$, \cref{lem:improve} degenerates to the Lemma 2 in SAC (see Appendix A. for more details).
\begin{lemma} 
\label{lem:improve}
Let $\pi_{\boldsymbol{s}_t}^H$ be the optimizer of the optimization objective of \cref{eq:objective}. 
When the new policy $\pi_{new}(\cdot|\boldsymbol{s}_t) = \pi_{\boldsymbol{s}_t}^H(\cdot|\boldsymbol{s}_t)$,
$V^{\pi_{new}}(\boldsymbol{s}_t) \geq V^{\pi_{old}}(\boldsymbol{s}_t)$ for all $\boldsymbol{s}_t \in S $.
\end{lemma}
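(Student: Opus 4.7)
The plan is to extend the soft-policy-improvement argument underlying Lemma~2 of SAC from a single time step to an $H$-step window. The strategy has two main ingredients: derive an $H$-step optimality inequality that lower-bounds $V^{\pi_{old}}(\boldsymbol{s}_t)$ in terms of the planning value, then iterate this inequality at the $H$-step scale so that its limit recovers $V^{\pi_{new}}$.

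Because $\pi_{old}$ restricted to the horizon $\boldsymbol{s}_t,\ldots,\boldsymbol{s}_{t+H-1}$ is a feasible choice in the optimization defining $\pi_{\boldsymbol{s}_t}^H$ in \cref{eq:objective}, optimality gives $J^H_{\boldsymbol{s}_t}(\pi_{\boldsymbol{s}_t}^H) \geq J^H_{\boldsymbol{s}_t}(\pi_{old})$. Unrolling the soft Bellman equation for $\pi_{old}$ across $H$ consecutive steps collapses the right-hand side to exactly $V^{\pi_{old}}(\boldsymbol{s}_t)$ (reading the terminal $V^{\pi_{old}}(\boldsymbol{s}_{t+H})$ as discounted by $\gamma^H$), which yields the pointwise bound $J^H_{\boldsymbol{s}_t}(\pi_{\boldsymbol{s}_t}^H) \geq V^{\pi_{old}}(\boldsymbol{s}_t)$ for every state. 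Bellman's principle of optimality applied to the $H$-step soft-control problem additionally identifies, at each reachable successor $\boldsymbol{s}_{t+k}$, the sub-policy of $\pi_{\boldsymbol{s}_t}^H$ with the optimal first action of the $(H-k)$-step plan from $\boldsymbol{s}_{t+k}$, which dovetails with the construction $\pi_{new}(\cdot|\boldsymbol{s}) = \pi_{\boldsymbol{s}}^H(\cdot|\boldsymbol{s})$.

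To close the argument I would telescope the pointwise bound over $H$-step blocks. Introduce a non-stationary ``receding-horizon with period $H$'' comparator $\mu$ that, at each block boundary $\boldsymbol{s}_{jH}$, executes the full plan $\pi_{\boldsymbol{s}_{jH}}^H$ for $H$ consecutive steps before restarting; applying the pointwise bound at every restart and summing over the geometric factor $\gamma^{jH}$ gives a telescope collapsing to $V^\mu(\boldsymbol{s}_t) \geq V^{\pi_{old}}(\boldsymbol{s}_t)$. Promoting $\mu$ to the stationary $\pi_{new}$, which replans at every step instead of every $H$-th step, then yields $V^{\pi_{new}}(\boldsymbol{s}_t) \geq V^\mu(\boldsymbol{s}_t) \geq V^{\pi_{old}}(\boldsymbol{s}_t)$, recovering SAC Lemma~2 when $H=1$.

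The main obstacle is that the naive one-step soft Bellman inequality $V^{\pi_{old}}(\boldsymbol{s}) \leq \mathbb{E}_{\boldsymbol{a}\sim\pi_{new}}[r(\boldsymbol{s},\boldsymbol{a}) + \gamma V^{\pi_{old}}(\boldsymbol{s}') - \log\pi_{new}(\boldsymbol{a}|\boldsymbol{s})]$ that drives the $H=1$ proof can fail for $H>1$: the first action drawn from $\pi_{\boldsymbol{s}}^H(\cdot|\boldsymbol{s})$ is selected against the richer optimal $(H-1)$-step continuation value rather than against $V^{\pi_{old}}$, and may therefore deliver a worse one-step soft improvement than $\pi_{old}$ itself. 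The argument must therefore be carried out at the $H$-step scale, where the optimality of $\pi_{\boldsymbol{s}_t}^H$ appears in exactly the form required by the telescoping; the delicate verification that replanning at every step (as in $\pi_{new}$) does not lose value relative to the block-wise replanning policy $\mu$ is handled by combining the dynamic-programming substructure of $\pi_{\boldsymbol{s}}^H$ with the $\gamma^H$-contraction of the corresponding $H$-step soft Bellman operator.
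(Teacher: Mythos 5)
Your first step ($J^H_{\boldsymbol{s}_t}(\pi^H_{\boldsymbol{s}_t})\ge J^H_{\boldsymbol{s}_t}(\pi_{old})=V^{\pi_{old}}(\boldsymbol{s}_t)$ by feasibility of $\pi_{old}$) matches the paper, and your block-telescoping bound $V^{\mu}\ge V^{\pi_{old}}$ for the period-$H$ receding-horizon comparator $\mu$ is sound. The gap is the final step $V^{\pi_{new}}\ge V^{\mu}$, which you assert is ``handled by combining the dynamic-programming substructure of $\pi^H_{\boldsymbol{s}}$ with the $\gamma^H$-contraction'' but never prove --- and this is precisely where the whole difficulty of the lemma lives. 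The DP substructure you invoke does not dovetail with the construction of $\pi_{new}$: Bellman optimality identifies the continuation of $\pi^H_{\boldsymbol{s}_t}$ at a successor $\boldsymbol{s}_{t+k}$ with the optimal \emph{$(H-k)$-horizon} plan from $\boldsymbol{s}_{t+k}$, whereas $\pi_{new}(\cdot|\boldsymbol{s}_{t+k})=\pi^H_{\boldsymbol{s}_{t+k}}(\cdot|\boldsymbol{s}_{t+k})$ is the first action of the \emph{$H$-horizon} plan; these are different objects, so comparing every-step replanning against block replanning is not a routine consequence of contraction. You correctly diagnosed that the one-step soft Bellman inequality fails for $H>1$, but your fix merely relocates that difficulty into the unproven promotion from $\mu$ to $\pi_{new}$.

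The paper closes this gap without any block comparator, via a shift-and-pad feasibility argument: expanding the terminal term $V^{\pi_{old}}(\boldsymbol{s}_{t+H})$ by one step of $\pi_{old}$, the time-shifted tail of $\pi^H_{\boldsymbol{s}_t}$ on $\boldsymbol{s}_{t+1:t+H-1}$ padded with $\pi_{old}$ at $\boldsymbol{s}_{t+H}$ is a feasible candidate for the $H$-step problem rooted at $\boldsymbol{s}_{t+1}$, hence is dominated by its optimum. This yields the recursion $J^H_{\boldsymbol{s}_t}(\pi^H_{\boldsymbol{s}_t})\le \mathbb{E}_{\boldsymbol{a}_t\sim\pi^H_{\boldsymbol{s}_t}}\bigl[r^{\pi^H_{\boldsymbol{s}_t}}(\boldsymbol{s}_t,\boldsymbol{a}_t)+\gamma\, J^H_{\boldsymbol{s}_{t+1}}(\pi^H_{\boldsymbol{s}_{t+1}})\bigr]$, which peels off exactly one action of $\pi_{new}$ per application; iterating it \emph{at every step} (not every $H$ steps) accumulates the discounted soft rewards of $\pi_{new}$ and converges to $V^{\pi_{new}}(\boldsymbol{s}_t)$, giving $V^{\pi_{old}}(\boldsymbol{s}_t)\le J^H_{\boldsymbol{s}_t}(\pi^H_{\boldsymbol{s}_t})\le V^{\pi_{new}}(\boldsymbol{s}_t)$ directly. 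To repair your proof you would need to supply essentially this same inequality to justify $V^{\pi_{new}}\ge V^{\mu}$, at which point the detour through $\mu$ becomes unnecessary.
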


\subsection{Policy Convergence}
The monotonic increasing property of our extended form is crucial, because it facilitates the derivation of the proposition that this form will provably converge to the optimal maximum entropy policy defined in SAC. We present the result in \cref{thm:policy_conv}.
\begin{theorem}
\label{thm:policy_conv}
Let $\pi_0$ be any initial policy. Assuming $|\mathcal{A}|<\infty$, if the policy evaluation in \cref{eq:sac_evaluation} and the policy improvement with the objective in \cref{eq:objective} are alternatively carried out, $\pi_0$ converges to a policy $\pi_*$, with $V^{\pi_{*}}(\boldsymbol{s}_t) \geq V^{\pi}(\boldsymbol{s}_t)$ for any $\boldsymbol{s}_t \in S $.
\end{theorem}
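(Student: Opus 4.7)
The plan is to mirror the soft policy iteration convergence template used in the proof of Theorem~1 of \cite{sac}, with \cref{lem:improve} supplying the monotonic improvement step in place of the one-step soft improvement lemma. Let $\pi_i$ denote the policy obtained after $i$ rounds of alternating policy evaluation (via \cref{eq:sac_evaluation}, run to convergence) and extended policy improvement (using the objective in \cref{eq:objective}), and let $V^{\pi_i}$ be the corresponding soft value function.

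First I would establish monotonicity and boundedness. Applying \cref{lem:improve} to the pair $(\pi_i,\pi_{i+1})$ yields $V^{\pi_{i+1}}(\boldsymbol{s})\geq V^{\pi_i}(\boldsymbol{s})$ for every $\boldsymbol{s}\in\mathcal{S}$, so the sequence $\{V^{\pi_i}(\boldsymbol{s})\}_i$ is pointwise non-decreasing. The assumption $|\mathcal{A}|<\infty$ bounds the entropy term $-\log\pi(\boldsymbol{a}|\boldsymbol{s})$ from above by $\log|\mathcal{A}|$, and since the reward is bounded, the augmented reward $r^{\pi}$ is uniformly bounded, whence $V^{\pi_i}$ is uniformly bounded above by $(r_{\max}+\log|\mathcal{A}|)/(1-\gamma)$. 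The monotone convergence theorem then produces a pointwise limit $V^{*}$, and a standard compactness argument on the space of policies gives a cluster point $\pi_{*}$ with $V^{\pi_{*}}=V^{*}$.

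Next I would establish the optimality claim $V^{\pi_{*}}(\boldsymbol{s}_t)\geq V^{\pi}(\boldsymbol{s}_t)$ for an arbitrary $\pi$. At the fixed point, $\pi_{*}(\cdot|\boldsymbol{s}_t)$ is the first-step marginal of some maximizer of $J_{\boldsymbol{s}_t}^{H}$ with terminal bootstrap $V^{\pi_{*}}$; extending the rollout by following $\pi_{*}$ over the subsequent $H-1$ states also achieves $J_{\boldsymbol{s}_t}^{H}(\pi_{*})=V^{\pi_{*}}(\boldsymbol{s}_t)$. Maximality then gives, for any competing $\pi$,
\begin{equation*}
V^{\pi_{*}}(\boldsymbol{s}_t)\;\geq\;\mathbb{E}_{\pi}\!\left[\sum_{i=0}^{H-1}\gamma^{i}\,r^{\pi}(\boldsymbol{s}_{t+i},\boldsymbol{a}_{t+i}) + \gamma^{H}\,V^{\pi_{*}}(\boldsymbol{s}_{t+H})\right],
\end{equation*}
and iterating this inequality at $\boldsymbol{s}_{t+kH}$ for $k=1,2,\dots$ telescopes: the bootstrap residual is scaled by $\gamma^{kH}\to 0$ while the accumulated augmented rewards converge to $V^{\pi}(\boldsymbol{s}_t)$, yielding $V^{\pi_{*}}(\boldsymbol{s}_t)\geq V^{\pi}(\boldsymbol{s}_t)$.

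The hard part is this optimality step. Two subtleties need care: (i) controlling the tail $\gamma^{kH}\,\mathbb{E}_{\pi}[V^{\pi_{*}}(\boldsymbol{s}_{t+kH})]$ uniformly in $\boldsymbol{s}_t$, which leans on the uniform upper bound established above; and (ii) verifying that the cluster-point policy $\pi_{*}$ really does inherit the extended-improvement fixed-point property in the limit, which requires continuity of the soft Bellman backup together with a Berge-type stability result for the $\arg\max$ of \cref{eq:objective}. Once \cref{lem:improve} and the boundedness are in hand, the monotonicity portion reduces to the familiar monotone-convergence template, so the real work lies in cleanly executing the telescoping limit argument.
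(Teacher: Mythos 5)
Your overall architecture matches the paper's proof: monotonicity from \cref{lem:improve} plus boundedness gives convergence to some $\pi_{*}$, and you then unroll the fixed-point inequality $V^{\pi_{*}}(\boldsymbol{s}_t)\geq J_{\boldsymbol{s}_t}^{H}(\pi)$ in blocks of $H$ steps so that the bootstrap term decays as $\gamma^{kH}$ and the accumulated rewards converge to $V^{\pi}(\boldsymbol{s}_t)$ --- that telescoping step is exactly what the paper does and it goes through. The gap is in how you justify the key inequality $V^{\pi_{*}}(\boldsymbol{s}_t)\geq \max_{\pi}J_{\boldsymbol{s}_t}^{H}(\pi)$ at the fixed point. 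You correctly establish $J_{\boldsymbol{s}_t}^{H}(\pi_{*})=V^{\pi_{*}}(\boldsymbol{s}_t)$ (following $\pi_{*}$ for $H$ steps and bootstrapping with $V^{\pi_{*}}$ reproduces $V^{\pi_{*}}$), but then write ``maximality then gives'' the desired inequality. That identity only yields $V^{\pi_{*}}(\boldsymbol{s}_t)\leq J_{\boldsymbol{s}_t}^{H}(\pi_{\boldsymbol{s}_t}^{H})$, i.e.\ the wrong direction: the fixed-point property constrains only the first-step marginal $\pi_{\boldsymbol{s}_t}^{H}(\cdot|\boldsymbol{s}_t)=\pi_{*}(\cdot|\boldsymbol{s}_t)$, so it does not make the $H$-step extension of $\pi_{*}$ a maximizer of $J_{\boldsymbol{s}_t}^{H}$, and $J_{\boldsymbol{s}_t}^{H}(\pi_{*})=\max_{\pi}J_{\boldsymbol{s}_t}^{H}(\pi)$ does not follow from what you have written.

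The missing ingredient is the chain of inequalities already established inside the proof of \cref{lem:improve}: there one shows $J_{\boldsymbol{s}_t}^{H}(\pi_{\boldsymbol{s}_t}^{H})\leq V^{\pi_{new}}(\boldsymbol{s}_t)$ by repeatedly replacing the tail of the optimal $H$-step plan with the re-planned solution at the next state. At the fixed point $\pi_{new}=\pi_{old}=\pi_{*}$, this gives $V^{\pi_{*}}(\boldsymbol{s}_t)\geq J_{\boldsymbol{s}_t}^{H}(\pi_{\boldsymbol{s}_t}^{H})$, which combined with your $\leq$ direction yields $V^{\pi_{*}}(\boldsymbol{s}_t)=J_{\boldsymbol{s}_t}^{H}(\pi_{\boldsymbol{s}_t}^{H})\geq J_{\boldsymbol{s}_t}^{H}(\pi)$ for every competing $\pi$; from there your telescoping argument finishes the proof. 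Your side remarks on compactness of the policy class and Berge-type stability of the $\arg\max$ are refinements the paper does not bother with (it simply infers convergence of $\pi_{i}$ from convergence of the values), and while they are legitimate concerns, they do not substitute for the missing inequality above.
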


\subsection{The Effect of Planning Horizon}
We have shown that the proposed extension of policy improvement, based on optimization of the action planning over multiple time steps, can always lead to a higher value via the developed distillation, which is guaranteed to converge to the optimal policy. In this section, we will discuss another problem: does the extended form of policy improvement incorporate the farsight of planning and benefit SAC? Or more generally, does a larger planning horizon $H$ always result in a better value? 

Unfortunately, there exist some special cases where a larger $H$ leads to a smaller value due to a bad initial policy $\pi_{old}$. 
Although a larger $H$ is not equivalent to a higher value, we can still show the potential advantage of increasing $H$ in two aspects. 

\textbf{(1)} A larger horizon results in a higher optimization objective defined in \cref{eq:objective}, as formalized in \cref{lem:monotone}.

\begin{lemma}
\label{lem:monotone}
Let $\pi_{\boldsymbol{s}_t}^H$ and $\pi_{\boldsymbol{s}_t}^{H+1}$ be the optimal solution of $J_{\boldsymbol{s}_t}^H(\pi)$ and $J_{\boldsymbol{s}_t}^{H+1}(\pi)$. Then $J_{\boldsymbol{s}_t}^{H+1}(\pi_{\boldsymbol{s}_t}^{H+1}) \geq J_{\boldsymbol{s}_t}^{H}(\pi_{\boldsymbol{s}_t}^{H})$ for all $H\geq 1$ and $\boldsymbol{s}_t \in S$.
\end{lemma}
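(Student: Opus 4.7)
The plan is to exhibit an explicit feasible policy for the $(H{+}1)$-step optimization that already attains the value $J_{\boldsymbol{s}_t}^H(\pi_{\boldsymbol{s}_t}^H)$, so that the $(H{+}1)$-step optimum can only be at least as large. Concretely, I would define a candidate $\tilde{\pi}$ on the states $\boldsymbol{s}_t,\ldots,\boldsymbol{s}_{t+H}$ by setting $\tilde{\pi}(\cdot|\boldsymbol{s}_{t+i}) = \pi_{\boldsymbol{s}_t}^{H}(\cdot|\boldsymbol{s}_{t+i})$ for $i = 0,\ldots,H-1$ and $\tilde{\pi}(\cdot|\boldsymbol{s}_{t+H}) = \pi_{old}(\cdot|\boldsymbol{s}_{t+H})$, i.e., follow the $H$-step optimizer for the first $H$ decisions and then take one extra step under the old policy. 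This $\tilde{\pi}$ is a valid feasible point for $J_{\boldsymbol{s}_t}^{H+1}$.

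The key step is the algebraic identity $J_{\boldsymbol{s}_t}^{H+1}(\tilde{\pi}) = J_{\boldsymbol{s}_t}^{H}(\pi_{\boldsymbol{s}_t}^{H})$. I would prove this by peeling off the $i = H$ summand from the objective $J_{\boldsymbol{s}_t}^{H+1}(\tilde{\pi})$ and invoking the SAC Bellman identity $V^{\pi_{old}}(\boldsymbol{s}_{t+H}) = \mathbb{E}_{\boldsymbol{a} \sim \pi_{old}}\!\left[r(\boldsymbol{s}_{t+H},\boldsymbol{a}) - \log \pi_{old}(\boldsymbol{a}|\boldsymbol{s}_{t+H}) + \gamma V^{\pi_{old}}(\boldsymbol{s}_{t+H+1})\right]$, which is just \cref{eq:sac_evaluation} combined with the $Q$/$V$ relation. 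Because $\tilde{\pi}$ matches $\pi_{old}$ at step $H$, the modified reward $r^{\tilde{\pi}}$ at that step equals $r^{\pi_{old}}$, so the extra one-step contribution plus the discounted bootstrap $V^{\pi_{old}}(\boldsymbol{s}_{t+H+1})$ collapses back exactly to the single bootstrap $V^{\pi_{old}}(\boldsymbol{s}_{t+H})$ used in the $H$-step objective. Meanwhile, the first $H$ expected reward-plus-entropy terms coincide with those generated under $\pi_{\boldsymbol{s}_t}^{H}$ because the model-based state distribution and the first $H$ action distributions of $\tilde{\pi}$ are identical to those of $\pi_{\boldsymbol{s}_t}^{H}$.

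Putting the two pieces together, the optimality of $\pi_{\boldsymbol{s}_t}^{H+1}$ for $J_{\boldsymbol{s}_t}^{H+1}$ yields $J_{\boldsymbol{s}_t}^{H+1}(\pi_{\boldsymbol{s}_t}^{H+1}) \geq J_{\boldsymbol{s}_t}^{H+1}(\tilde{\pi}) = J_{\boldsymbol{s}_t}^{H}(\pi_{\boldsymbol{s}_t}^{H})$, which is the claim. I do not expect a serious obstacle; the only nuance is careful bookkeeping of the discount factor on the bootstrap term and making sure the entropy term at step $H$ uses $\pi_{old}$ rather than $\pi_{\boldsymbol{s}_t}^{H}$, both of which are enforced automatically by the construction of $\tilde{\pi}$.
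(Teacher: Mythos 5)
Your proposal is correct and is essentially the paper's own argument: the paper likewise uses the SAC Bellman identity to rewrite $\gamma^H V^{\pi_{old}}(\boldsymbol{s}_{t+H})$ as one extra step under $\pi_{old}$ plus $\gamma^{H+1}V^{\pi_{old}}(\boldsymbol{s}_{t+H+1})$, recognizes the result as $J_{\boldsymbol{s}_t}^{H+1}$ evaluated at the composite policy (your $\tilde{\pi}$), and concludes by optimality of $\pi_{\boldsymbol{s}_t}^{H+1}$. You merely run the identity in the collapsing direction rather than the expanding one; the content is identical.
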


\textbf{(2)} Although the resulting policy does not have a value monotonically increasing with $H$, we can prove that $\pi_{new}$ converges to the optimal policy as $H$ increases, which is formalized in \cref{thm:optim}.

\begin{theorem}
\label{thm:optim}
Let $\pi_{\boldsymbol{s}_t}^H$ be the optimal solution of $J_{\boldsymbol{s}_t}^H(\pi)$, and $\pi_{new}(\cdot|\boldsymbol{s}_t)=\pi_{\boldsymbol{s}_t}^H(\cdot|\boldsymbol{s}_t)$. $\pi_{*}$ denotes the optimal policy. As $H$ increases, $V^{\pi_{new}}$ and $J_{\boldsymbol{s}_t}^H(\pi_{\boldsymbol{s}_t}^H)$ converge to $V^{\pi_{*}}$ for all $\boldsymbol{s}_t \in S$. Specifically, $V^{\pi_{new}} \geq J_{\boldsymbol{s}_t}^H(\pi_{\boldsymbol{s}_t}^H) \geq V^{\pi_{*}}(\boldsymbol{s}_{t}) 
-    
\frac{\gamma^H
\cdot r^{max}}{1-\gamma}$ with $r^{max}$ the maximum of $r^{\pi}(\boldsymbol{s},\boldsymbol{a})$ over all $\pi$ and $(\boldsymbol{s},\boldsymbol{a})\in |\mathcal{S}|\times|\mathcal{A}|$.
\end{theorem}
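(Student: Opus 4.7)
The plan is to establish the sandwich
\[ V^{\pi_*}(\boldsymbol{s}_t) \;\geq\; V^{\pi_{new}}(\boldsymbol{s}_t) \;\geq\; J_{\boldsymbol{s}_t}^H(\pi_{\boldsymbol{s}_t}^H) \;\geq\; V^{\pi_*}(\boldsymbol{s}_t) - \tfrac{\gamma^H r^{max}}{1-\gamma}, \]
from which the convergence claim follows by a squeeze argument as $H\to\infty$, since $\gamma^H\to 0$. The outermost inequality $V^{\pi_*}\geq V^{\pi_{new}}$ is just the definition of $\pi_*$ as an optimal policy, so only the two middle inequalities require work.

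For the rightmost inequality, I would use $\pi_*$ as a suboptimal witness in the planning maximum: by optimality of $\pi_{\boldsymbol{s}_t}^H$, one has $J_{\boldsymbol{s}_t}^H(\pi_{\boldsymbol{s}_t}^H)\geq J_{\boldsymbol{s}_t}^H(\pi_*)$. Unrolling $V^{\pi_*}(\boldsymbol{s}_t)$ for $H$ steps via its Bellman equation reproduces the first $H$ reward-plus-entropy terms of $J_{\boldsymbol{s}_t}^H(\pi_*)$ verbatim, so the gap collapses to $\gamma^H \mathbb{E}_{\pi_*}[V^{\pi_*}(\boldsymbol{s}_{t+H})-V^{\pi_{old}}(\boldsymbol{s}_{t+H})]$; this terminal value gap is then bounded by $r^{max}/(1-\gamma)$ via the geometric sum of the per-step bound $r^{max}$, yielding the claimed slack.

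The main obstacle is the leftmost inequality $V^{\pi_{new}}(\boldsymbol{s}_t)\geq J_{\boldsymbol{s}_t}^H(\pi_{\boldsymbol{s}_t}^H)$, which relates the evaluated value of the distilled policy to its own $H$-step planning target. A direct expectation comparison fails because $V^{\pi_{new}}$ and $J^H$ integrate under \emph{different} action distributions after the first time step. My plan is to denote $W(\boldsymbol{s}) := J_{\boldsymbol{s}}^H(\pi_{\boldsymbol{s}}^H)$ and show that $W$ is a sub-fixed-point of the soft Bellman operator $\mathcal{T}^{\pi_{new}}$, i.e.\ $W\leq \mathcal{T}^{\pi_{new}}W$ pointwise. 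Peeling off one step of the planning objective gives
\[ W(\boldsymbol{s}) = \mathbb{E}_{\boldsymbol{a}\sim\pi_{\boldsymbol{s}}^H(\cdot|\boldsymbol{s})}\!\left[\,r^{\pi}(\boldsymbol{s},\boldsymbol{a})+\gamma\, J_{\boldsymbol{s}'}^{H-1}(\pi_{\boldsymbol{s}}^H|_{\mathrm{tail}})\,\right], \]
and by Bellman's principle of optimality the restricted tail $\pi_{\boldsymbol{s}}^H|_{\mathrm{tail}}$ is itself an optimizer of the $(H{-}1)$-step sub-problem at each reachable successor $\boldsymbol{s}'$, so the bracketed continuation equals $J_{\boldsymbol{s}'}^{H-1}(\pi_{\boldsymbol{s}'}^{H-1})$. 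Applying Lemma~\ref{lem:monotone} upgrades this to $\leq J_{\boldsymbol{s}'}^H(\pi_{\boldsymbol{s}'}^H) = W(\boldsymbol{s}')$, and since distillation gives $\pi_{new}(\cdot|\boldsymbol{s}) = \pi_{\boldsymbol{s}}^H(\cdot|\boldsymbol{s})$, the right-hand side is precisely $\mathcal{T}^{\pi_{new}}W(\boldsymbol{s})$.

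Iterating $\mathcal{T}^{\pi_{new}}$ on $W\leq \mathcal{T}^{\pi_{new}}W$ and invoking the monotonicity and $\gamma$-contractivity of $\mathcal{T}^{\pi_{new}}$ with unique fixed point $V^{\pi_{new}}$ then gives $W\leq V^{\pi_{new}}$, closing the sandwich. The hard part, as noted, is recognizing $W$ as a sub-fixed-point: neither Bellman's principle of optimality alone (which would only yield an $(H{-}1)$-step quantity on the right) nor Lemma~\ref{lem:monotone} alone suffices, and it is precisely their combination that bridges the two. With both inequalities and the trivial upper bound in place, letting $H\to\infty$ forces $V^{\pi_{new}}$ and $J_{\boldsymbol{s}_t}^H(\pi_{\boldsymbol{s}_t}^H)$ to converge to $V^{\pi_*}(\boldsymbol{s}_t)$ for every $\boldsymbol{s}_t\in\mathcal{S}$.
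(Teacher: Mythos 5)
Your proposal is correct, and the substantive part of the paper's own proof---the lower bound $J_{\boldsymbol{s}_t}^H(\pi_{\boldsymbol{s}_t}^H) \geq V^{\pi_{*}}(\boldsymbol{s}_{t}) - \gamma^H r^{max}/(1-\gamma)$---is obtained by you in exactly the same way: use $\pi_*$ as a suboptimal witness so that $J_{\boldsymbol{s}_t}^H(\pi_{\boldsymbol{s}_t}^H)\geq J_{\boldsymbol{s}_t}^H(\pi_*)$, telescope the first $H$ soft-reward terms back into $V^{\pi_*}(\boldsymbol{s}_t)$, and bound the residual $\gamma^H\,\mathbb{E}_{\pi_*}\!\left[V^{\pi_*}(\boldsymbol{s}_{t+H})-V^{\pi_{old}}(\boldsymbol{s}_{t+H})\right]$ by $\gamma^H\lVert V^{\pi_*}-V^{\pi_{old}}\rVert_\infty\leq \gamma^H r^{max}/(1-\gamma)$ (your appeal to the geometric sum here carries the same implicit looseness as the paper's, so no complaint). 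Where you genuinely diverge is the inequality $V^{\pi_{new}}\geq J_{\boldsymbol{s}_t}^H(\pi_{\boldsymbol{s}_t}^H)$: the paper does not reprove it here but simply imports it from the proof of \cref{lem:improve}, where it is established by appending one extra step of $\pi_{old}$ to the expanded objective and recognizing the resulting $H$-step tail as a feasible, hence suboptimal, candidate for $J_{\boldsymbol{s}_{t+1}}^H$. You instead peel off the head, invoke the principle of optimality to identify the $(H-1)$-step continuation with $J_{\boldsymbol{s}'}^{H-1}(\pi_{\boldsymbol{s}'}^{H-1})$ at each reachable successor, and then upgrade to $J_{\boldsymbol{s}'}^{H}(\pi_{\boldsymbol{s}'}^{H})$ via \cref{lem:monotone}; both routes arrive at the same sub-fixed-point relation $W\leq\mathcal{T}^{\pi_{new}}W$ and close by iterating the monotone contraction. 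Since the proof of \cref{lem:monotone} is itself the ``append one step of $\pi_{old}$'' trick, your derivation is morally equivalent but packages the bookkeeping differently; what it buys is a cleaner operator-theoretic statement ($W$ is a subsolution of the soft Bellman equation for $\pi_{new}$) at the cost of an extra appeal to the principle of optimality, which the paper's version avoids by never needing the tail of $\pi_{\boldsymbol{s}_t}^H$ to be optimal---only feasible. The final squeeze as $H\to\infty$ matches the paper's (implicit) argument.
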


Starting from \cref{thm:optim}, it can be naturally derived that, we can always find a larger $\hat{H}$ than $H$, which results in a policy with a larger value. We formalize this conclusion as \cref{thm:exist}.

\begin{theorem}
\label{thm:exist}
Let $\pi_{\boldsymbol{s}_t}^H$ be the optimal solution of $J_{\boldsymbol{s}_t}^H(\pi)$, and $\pi_{new}^H(\cdot|\boldsymbol{s}_t)=\pi_{\boldsymbol{s}_t}^H(\cdot|\boldsymbol{s}_t)$. There exists another $\hat{H} > H$, with $V^{\pi_{new}^{\hat{H}}}\geq V^{\pi_{new}^{H}}$ for all $\boldsymbol{s}_t \in S$, assuming $|\mathcal{S}|<\infty$.
\end{theorem}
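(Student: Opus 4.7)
The plan is to leverage Theorem~\ref{thm:optim}, which provides the error bound $V^{\pi_{new}^{\hat{H}}}(\boldsymbol{s}_t) \geq V^{\pi_{*}}(\boldsymbol{s}_t) - \gamma^{\hat{H}} r^{max}/(1-\gamma)$ at every state, together with the trivial optimality inequality $V^{\pi_{*}}(\boldsymbol{s}_t) \geq V^{\pi_{new}^{H}}(\boldsymbol{s}_t)$. The first fact tells us that enlarging the horizon pushes the value of the new policy arbitrarily close to $V^{\pi_{*}}$ from below; the second tells us that $V^{\pi_{*}}$ is already an upper bound on the existing value. Existence of a suitable $\hat{H}$ therefore reduces to making the geometric error term smaller than the ``headroom'' $V^{\pi_{*}} - V^{\pi_{new}^{H}}$ uniformly over states.

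Concretely, I would define the pointwise suboptimality gap $\delta(\boldsymbol{s}_t) := V^{\pi_{*}}(\boldsymbol{s}_t) - V^{\pi_{new}^{H}}(\boldsymbol{s}_t) \geq 0$ and set $\delta_{\min} := \min_{\boldsymbol{s}_t \in \mathcal{S}} \delta(\boldsymbol{s}_t)$; the assumption $|\mathcal{S}|<\infty$ is used precisely here to ensure that the minimum is attained rather than merely an infimum. In the generic case $\delta_{\min} > 0$, I would pick an integer $\hat{H} > H$ large enough that $\gamma^{\hat{H}} r^{max}/(1-\gamma) < \delta_{\min}$, which is possible because $\gamma \in (0,1)$ and the bound decays geometrically. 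Chaining the two inequalities then yields, for every $\boldsymbol{s}_t$,
\[
V^{\pi_{new}^{\hat{H}}}(\boldsymbol{s}_t) \;\geq\; V^{\pi_{*}}(\boldsymbol{s}_t) - \frac{\gamma^{\hat{H}} r^{max}}{1-\gamma} \;>\; V^{\pi_{*}}(\boldsymbol{s}_t) - \delta_{\min} \;\geq\; V^{\pi_{*}}(\boldsymbol{s}_t) - \delta(\boldsymbol{s}_t) \;=\; V^{\pi_{new}^{H}}(\boldsymbol{s}_t),
\]
which is exactly the desired conclusion, and $\hat{H}$ depends only on the fixed quantities $\delta_{\min}$, $\gamma$, and $r^{max}$, hence is a well-defined finite integer.

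The main obstacle is the degenerate case $\delta_{\min}=0$, in which $\pi_{new}^{H}$ already attains $V^{\pi_{*}}$ at some state. The horizon-enlargement argument cannot force $V^{\pi_{new}^{\hat{H}}}$ above $V^{\pi_{*}}$ at those states, because Theorem~\ref{thm:optim} only provides convergence and not exact equality at any finite $\hat{H}$. My plan for this is to split $\mathcal{S}$ into $\mathcal{S}_+ = \{\boldsymbol{s}:\delta(\boldsymbol{s})>0\}$, where the argument above applies verbatim with $\min_{\boldsymbol{s}\in\mathcal{S}_+}\delta(\boldsymbol{s})$ in place of $\delta_{\min}$, and $\mathcal{S}_0 = \mathcal{S}\setminus\mathcal{S}_+$, where the inequality must be obtained separately by exploiting the convergence of $V^{\pi_{new}^{\hat{H}}}$ to $V^{\pi_{*}}$ together with the finiteness of $\mathcal{S}_0$ to pick a single $\hat{H}$ that simultaneously matches $V^{\pi_{*}}$ on $\mathcal{S}_0$ and dominates on $\mathcal{S}_+$. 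Closing this corner case cleanly is where I expect the main technical delicacy to lie.
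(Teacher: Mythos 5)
Your proposal takes essentially the same route as the paper, whose entire proof is the single assertion that, by \cref{thm:optim}, one can find $\hat{H}$ with $V^{\pi_{*}} - V^{\pi_{new}^{\hat{H}}} \leq V^{\pi_{*}} - V^{\pi_{new}^{H}}$ on all states. Your main-case argument---set $\delta_{\min}=\min_{\boldsymbol{s}\in\mathcal{S}}\bigl(V^{\pi_{*}}(\boldsymbol{s})-V^{\pi_{new}^{H}}(\boldsymbol{s})\bigr)$, attained because $|\mathcal{S}|<\infty$, then choose $\hat{H}$ with $\gamma^{\hat{H}}r^{max}/(1-\gamma)<\delta_{\min}$---is precisely the intended justification, worked out more explicitly than the paper does. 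The degenerate case $\delta_{\min}=0$ that you flag is a genuine subtlety, but it is a gap in the paper's proof as well, not one you introduced: at a state where $V^{\pi_{new}^{H}}$ already equals $V^{\pi_{*}}$, the geometric bound only shows that $V^{\pi_{new}^{\hat{H}}}$ approaches $V^{\pi_{*}}$ from below and cannot certify equality at any finite $\hat{H}$, and since the paper itself remarks that $V^{\pi_{new}^{H}}$ is not monotone in $H$, a strict drop at such a state is not obviously excluded. Closing it would require an additional structural fact (e.g., that exact optimality of $\pi_{new}^{H}$ at a state propagates to larger horizons), which neither your plan nor the paper supplies; as written, your proposal is at least as rigorous as the published one-line argument.
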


\begin{proof}
According to \cref{thm:optim}, we can always find a $\hat{H}$ with $V^{\pi_{*}}-V^{\pi_{new}^{\hat{H}}} \leq V^{\pi_{*}}-V^{\pi_{new}^{H}}$ on all states, which means $V^{\pi_{new}^{\hat{H}}}\geq V^{\pi_{new}^{H}}$.
\end{proof}
\section{Implementation}
According to the above theory, the proposed extended policy improvement via planning over multiple time steps can also guarantee value improvement and convergence to the optimal policy. And the increase of planning horizon has the potential to get a better new policy. In this section, we discuss some essential design choices for distilling the model-based planning into SAC \cite{sac}. We further propose a practical algorithm, \textit{\textbf{M}odel-based \textbf{P}lanning \textbf{D}istilled to \textbf{P}olicy (MPDP)}, under the model-based paradigm.
There are two essential issues in the design of MPDP, \textbf{(1)} how to solve the objective in \cref{eq:objective}, and \textbf{(2)} how to reduce the bias introduced by the generalization error of the environment model.

\subsection{Solver} 
Solving the proposed objective defined by \cref{eq:objective} is a model-based planning problem, which has been widely discussed in many prior works \cite{latco,pets,poplin}. We roughly divide the current solvers into two categories, sample-based methods and gradient-based methods.

Sample-based methods typically include random shooting and cross-entropy method (CEM) \cite{cem}. However, sample-based methods are usually inefficient in complex high-dimensional tasks.
Gradient-based methods include gradient optimization and collocation method \cite{latco}, which optimize with reward to the action sequence and backpropagate the gradient to all actions in the sequence.
Both gradient optimization and collocation methods suit our formulation due to their accessibility of the gradient. We can develop a practical algorithm based on both of them. We observe that they perform comparably on the MuJoCo benchmark in our early-stage experiments.

With the above discussion, we choose gradient optimization as our solver, because it naturally suits the framework of SAC and achieves comparable performance without introducing extra hyperparameters and computational cost compared to the collocation method.

\begin{algorithm}[ht]
   \caption{Farsighted Policy Improvement}
   \label{alg:improvement}
\begin{algorithmic}[1] 
   \Require state batch $B$, policy networks $\pi_{0:H_{max}-1}$, dynamic models $p_{1:K}$, threshold $u_T$, coefficient $\alpha$ and $\beta$
   \For{$\boldsymbol{s}$ {\bfseries in} $B$}
   \State $\boldsymbol{s}_0 = \boldsymbol{s}$, $J=0$
   \For{$t=0:H_{max}-1$}
        \State Sample $\boldsymbol{a}_t \sim \pi_{t}$
        \State Predict $\boldsymbol{s}_{t+1} \sim p_{1:K}(\boldsymbol{s}_{t+1},\boldsymbol{a}_{t})$
   \If{$u(\boldsymbol{s}_t,\boldsymbol{a}_t) \geq u_T$ or $\boldsymbol{s}_{t+1}$ is a terminal state}
   \State $J=J+\gamma^{t+1} \cdot V(\boldsymbol{s}_{t+1})$
   \State break
   \EndIf
  \State \begin{small}$
  J \!=\! J \!+\! \gamma^{t} \cdot [ r(\boldsymbol{s}_{t},\boldsymbol{a}_{t})
    \!-\! \alpha \cdot log\pi(\boldsymbol{a}_{t}|\boldsymbol{s}_{t})
    \!-\! \beta \cdot u(\boldsymbol{s}_{t},\boldsymbol{a}_{t}) ]
          $\end{small}
   \EndFor
   \EndFor
   \State Update $\pi_{0:H_{max}-1}$ with the mean of $\nabla_{\boldsymbol{a}_{0:H_{max}-1}}J$
\end{algorithmic}
\end{algorithm}

\subsection{Model Regularization} 
The bias resulting from the environment model's generalization error raises two issues for consideration. 
First, although increasing the planning horizon has the potential of resulting in a higher value theoretically, we must consider the trade-off between the bias of $Q^{\pi_{old}}$ and the environment model. A larger $H$ introduces more model bias but reduces the bias of $Q^{\pi_{old}}$. 
Second, we need to avoid the update of the policy towards the area where the model has high generalization error, because this will result in a sub-optimal solution and the gradients of the environment model at those unseen state-action pairs $(s, a)$ are unsupervised and not numerically stable, i.e., applying the environment model iteratively for many time steps may lead to gradient explosion \cite{latco}.

Both the two issues need the estimation of the model error, which has been well discussed in prior works. In this paper, we use One-vs-Rest (OvR) \cite{m2ac}, a simple method to estimate model errors. OvR learns multiple dynamic models and uses the KL divergence between models as an estimator of model error, which is formalized as:
\begin{equation}
\setlength{\abovedisplayskip}{5pt}
\setlength{\belowdisplayskip}{5pt}
    u(\boldsymbol{s},\boldsymbol{a}) = \sum_{i=1}^{K}D_{KL}[p_{i}(\cdot|\boldsymbol{s},\boldsymbol{a}) \Vert p_{-i}(\cdot|\boldsymbol{s},\boldsymbol{a})].
\end{equation}
Here $p_{i}(\cdot|s,a)$ is the predicted distribution of the one model and $p_{-i}(\cdot|s,a)$ is the mean of the rest models' prediction. 

Based on OvR, we develop two methods separately for the above two issues. First, we use adaptive horizons for trajectories starting from different states. The planning stops when a trajectory generates a state-action pair which has a model error larger than a pre-defined threshold. Secondly, we develop an additional regularization of model error, which adds the model error estimated by OvR on our objective \cref{eq:objective}. This regularization directs the final solution to the area where the environment model is more believable and reduces both the numerical instability and the model error. Specifically, we add the estimation $u(\boldsymbol{s},\boldsymbol{a})$ on the original reward $r^{\pi}(s,a)$ as a regularization, and re-formalize \cref{eq:objective} as:
\begin{equation}
    J_{\boldsymbol{s}_t}^{H,u}(\pi) 
    = 
    \mathbb{E}_{\boldsymbol{a} \sim \pi}
    \left[
        \sum_{i=0}^{H-1}
            \gamma^i 
            \cdot r^{\pi,u}(\boldsymbol{s}_{t+i},\boldsymbol{a}_{t+i})
            + V^{\pi_{old}}(\boldsymbol{s}_{t+H})
    \right], \label{eq:objective_reg}
\end{equation}
\begin{gather}
\begin{aligned}
    r^{\pi,u}(\boldsymbol{s}_{t+i},\boldsymbol{a}_{t+i}) 
    =
    &r(\boldsymbol{s}_{t+i},\boldsymbol{a}_{t+i}) 
    - log\pi(\boldsymbol{a}_{t+i}|\boldsymbol{s}_{t+i})
    \\
    &- \beta \cdot u(\boldsymbol{s}_{t+i},\boldsymbol{a}_{t+i}).
    \label{eq:beta_obj}
\end{aligned}
\end{gather}

\begin{algorithm}[ht]
   \caption{Model-based Planning Distilled to Policy}
   \label{alg:MPDP}
\begin{algorithmic}[1]
   \State Initialize data buffer $D=\emptyset$, dynamic models $p_{1:K}$, policy networks $\pi_{0:H_{max}-1}$, value networks $Q$ and $V$
   \Repeat
   \State Collect data from real environment with policy $\pi_0$: $D \leftarrow D\cup {(s,a,r,s')}$
   \State Train ensemble models $p_{1:K}$ on $D$
   \State Sample a batch $B$ from $D$
   \State Update $Q$ and $V$ with $B$ as in SAC
   \State Update $\pi_{0:H_{max}-1}$ by \cref{alg:improvement}.
   \Until{Convergence}
\end{algorithmic}
\end{algorithm}

\subsection{Model-based Planning Distilled to Policy}

\begin{figure*}[!ht]
\vskip 0.2in
\begin{center}
\centerline{\includegraphics[width=1.05\textwidth]{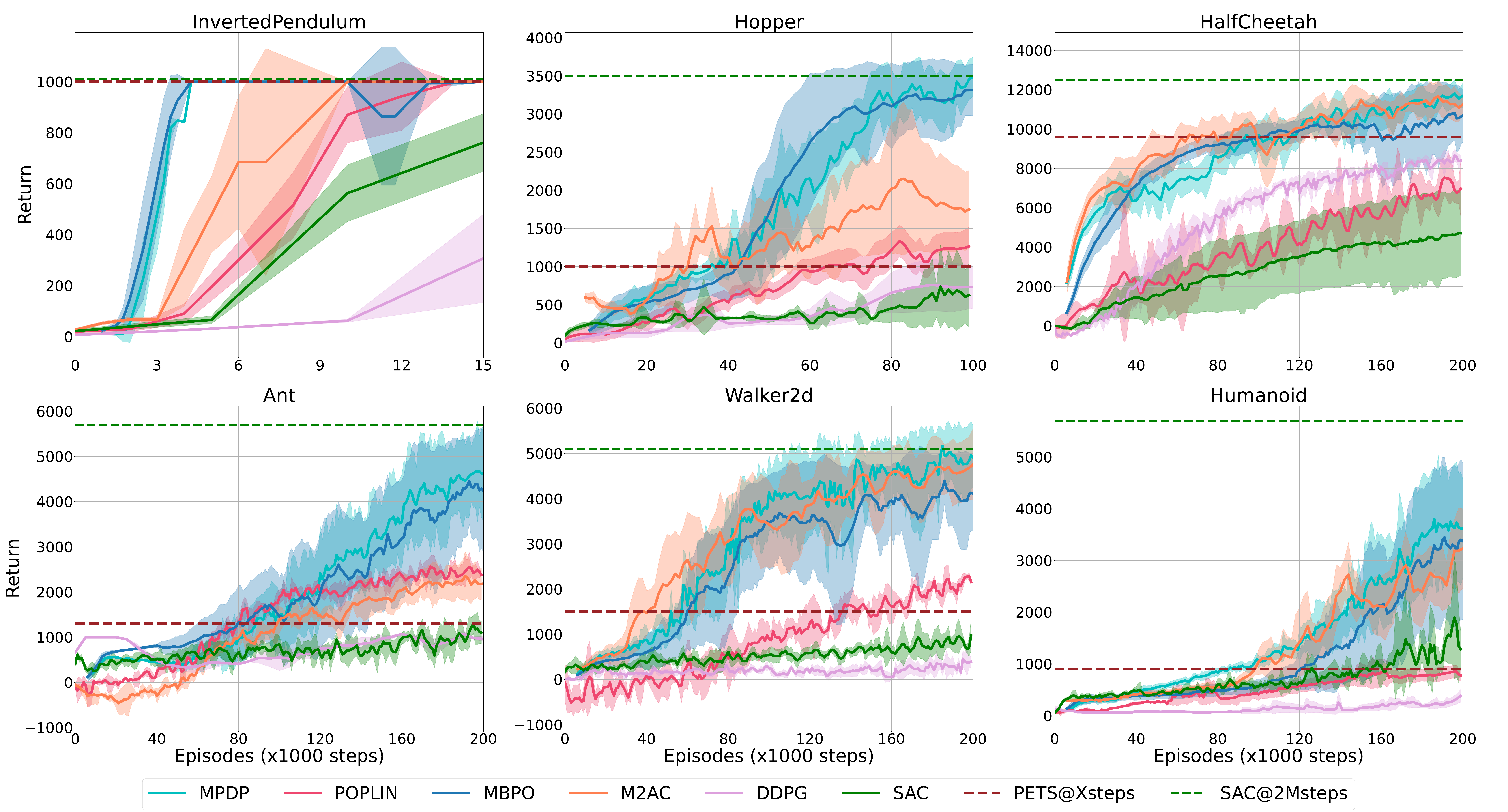}}
\caption{Performance curves for our method (MPDP) and baselines on MuJoCo continuous control benchmarks. Solid lines depict the mean of four trials and shaded regions correspond to standard deviation among trials. The dashed lines indicate the asymptotic performance of PETS at the corresponding training steps (15k steps for InvertedPendulum, 100k steps for Hopper, and 200k steps for the other tasks) and SAC at 2M steps.}
\label{fig:result}
\end{center}
\vskip -0.2in
\end{figure*}

We conclude our extended policy improvement in \cref{alg:improvement}. The algorithm processes a batch of states at each iteration and the model rollouts states until the task terminates, that is to say, the pair of $(\boldsymbol{s}_t, \boldsymbol{a}_t)$ has a larger model error than the threshold $u_T$, or the rollout reaches the max horizon $H_{max}$. And we maintain the policy networks $\pi_{0:H_{max}-1}$ at $H$ time steps. The policy networks generate the actions for each step and are updated jointly in our extended improvement step. After the model rollouts, the policy networks $\pi_{0:H_{max}-1}$ are updated with the gradients to the action sequence. The complete algorithm is described in \cref{alg:MPDP}. The method alternates among using the policy $\pi_{0}$ on the first step to interact with the environment, training an ensemble of models, and updating the policy with policy evaluation and our extended policy improvement.

\section{Experiment}
Our experiment goal is to investigate the following questions: \textbf{(1)} How the sample efficiency and the asymptotic performance of MPDP compared to state-of-the-art(SOTA) model-based planning algorithms? \textbf{(2)} How the proposed extended policy improvement and the design choices affect the performance?


\subsection{Comparison}

\paragraph{Baseline.} 
In this section, we focus on understanding how well MPDP performs compared to SOTA model-based planning algorithms. We choose PETS \cite{pets}, which uses CEM to perform model-based action planning; and POPLIN \cite{poplin}, which extends CEM from action space to the domain of policy network parameters and distills the planning results into the policy with behavior cloning. 
Additionally, we compare our proposed approach to the SOTA model-free methods and model-based methods without planning. For model-free algorithms, we compare to SAC \cite{sac} and DDPG \cite{ddpg}, which are the two competitive policy learning algorithms. 
For model-based RL, we choose MBPO \cite{mbpo} and M2AC \cite{m2ac}, which are the previous SOTA model-based baselines. 
MPDP, PETS, POPLIN, MBPO and M2AC share the same model architecture. The implementation details of our method are in Appendix B.

\paragraph{Results.}  
The performance curves on all six environments of MuJoCo are shown in \cref{fig:result}. It demonstrates that MPDP significantly outperforms the SOTA model-based planning algorithms (PETS and POPLIN) on both sample efficiency and asymptotic performance. For example, on the highly dimensional Ant task, MPDP’s performance at 140k steps is equivalent to that of POPLIN at 200k steps.

Further, the results in \cref{fig:result} reveal that MPDP achieves much higher convergence speed than the SOTA of model-free algorithms (SAC and DDPG) on the all tasks and obtains comparable asymptotic performance, which also validates that incorporating our extended policy improvement benefits a lot. We also observe that MPDP achieves better performance than the SOTA model-based algorithms, MBPO and M2AC on some complex tasks like Humanoid, and is comparable to them on the rest of tasks.

\subsection{Ablation Study}
In this section, we conduct a series of ablation studies on MPDP to investigate the effect of the designed adaptive horizon and regularization on the model error. We choose the Hopper task in the MuJoCo for the experiments.

\paragraph{Horizon.}
To verify that our method can really adapt the horizon to the model error, i.e. the adaptive horizon does not fall into a very small range and increases as the model generalizes better, we profile the average horizon of MPDP during the training on Hopper with different error threshold $u_T$ in \cref{fig:horizon_length}. As shown in the curves, the horizon grows from 2 to 12 as the training proceeds, where the model becomes more accurate in \cref{fig:beta_error}. It also proves that MPDP does not degenerate to SAC.

\begin{figure}[!h]
\begin{center}
\centerline{\includegraphics[width=0.47\textwidth]{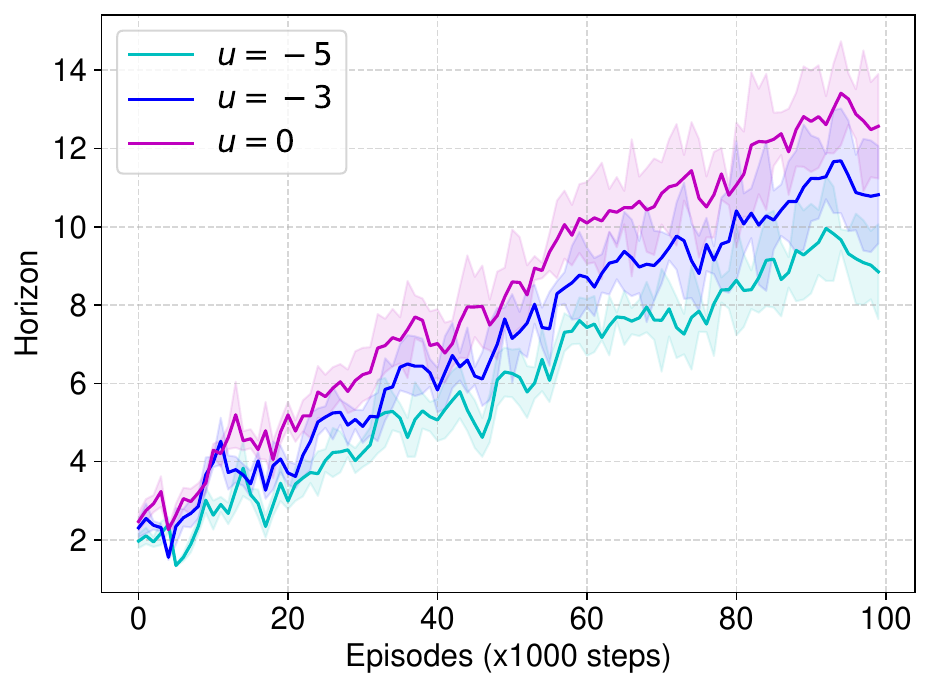}}
\caption{This figure demonstrates the length of the adaptive horizon of MPDP. The solid lines denote the average horizon length evaluated on each training batch. As the interactions accumulate, the model generalizes better and our method rapidly adapts to longer horizons.}
\label{fig:horizon_length}
\end{center}
\end{figure}

\begin{figure}[!h]
\begin{center}
\centerline{\includegraphics[width=0.47\textwidth]{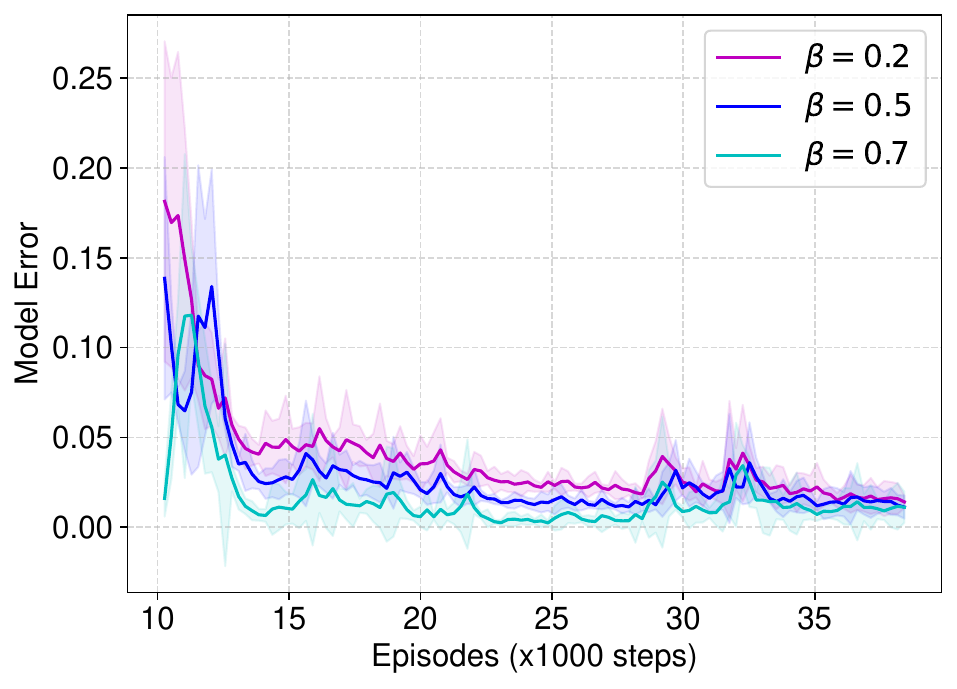}}
\caption{This figure shows the model error curves of MPDP with $\beta$ varying from 0.2 to 0.7, measured by the average $L_2$ norm of the predicted states on every 250 interactions. The model error decreases with $\beta$, which verifies that optimizing under our regularization effectively restricts behavior policy in the areas with low model error.}
\label{fig:beta_error}
\end{center}
\end{figure}

\paragraph{Model Error.}
We validate that the regularization based on OvR does push the policy to explore areas with low dynamic model error. We vary $\beta$ at \cref{eq:beta_obj}  with \{0.2, 0.5, 0.7\} and evaluate the model error as shown in \cref{fig:beta_error}. 
The result demonstrates that the model error decreases with $\beta$, which verifies the effectiveness of the designed regularization.
We also plot the final performance of corresponding $\beta$ in \cref{fig:beta_perform}. 
However, we find that a too large regularization harms the asymptotic performance due to the excessive restriction on the exploration area of the policy. \cref{fig:beta_perform} also implies that a larger regularization brings more stable results.

\begin{figure}[!h]
\begin{center}
\centerline{\includegraphics[width=0.47\textwidth]{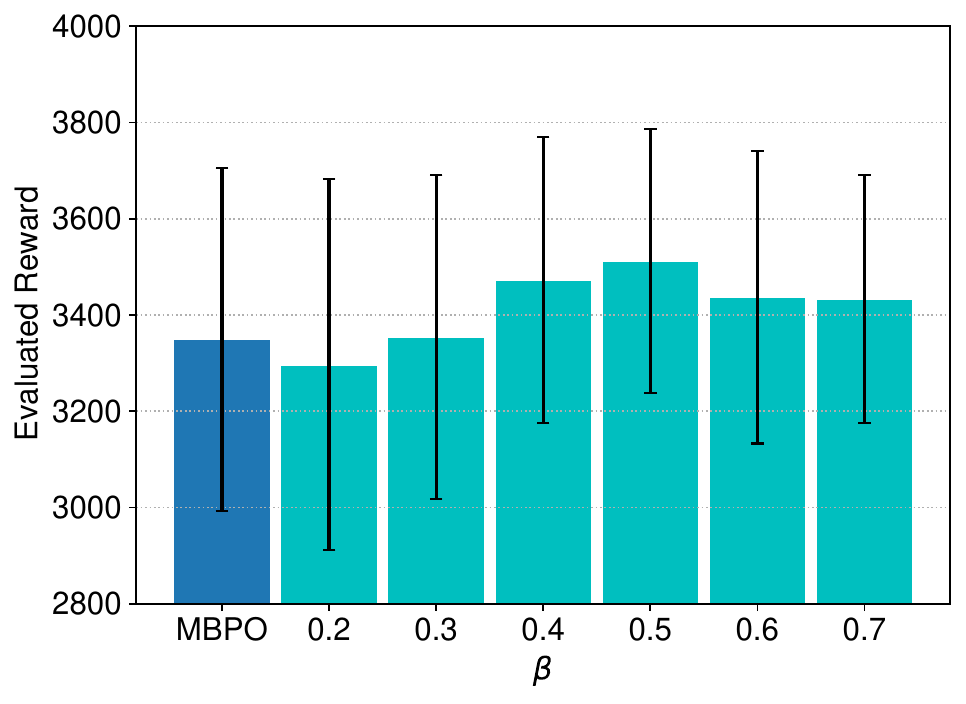}}
\caption{This figure displays the performance of MPDP with $\beta$ varying from 0.2 to 0.7 along with MBPO on the Hopper task, evaluated over 4 trials. As $\beta$ increases, the performance increases at first then decreases due to the too strong restriction on the exploration.
}
\label{fig:beta_perform}
\end{center}
\end{figure}

\section{Conclusion}
In this paper, we investigate the theoretical guarantee of distillation from model-based planning into an RL policy. We first extend the one-step optimization of SAC to a multi-step optimization formulation. Then, we develop a distillation approach based on the solution of the proposed multi-step optimization. It provably has the guarantee of monotonic improvement and convergence to the optimal policy. We further theoretically verify its potential to incorporate the foresight planning. Based on the theory, we discuss several design choices to instantiate a practical algorithm MPDP. 
Experimental results confirm that MPDP outperforms the state-of-the-art model-based planning algorithms in both sample efficiency and asymptotic performance on a range of continuous control tasks in MuJoCo.

One limitation is that the generalization ability of the horizon-adapted policy may not be strong enough because we fit the horizon to the model error for fast convergence speed. Thus, our method is efficient for task-specific but not exploration-oriented problems. We leave this to future work.

\newpage
\bibliographystyle{named}
\bibliography{ijcai23}

\begin{thebibliography}{}

\bibitem[\protect\citeauthoryear{Botev \bgroup \em et al.\egroup }{2013}]{cem}
Zdravko~I Botev, Dirk~P Kroese, Reuven~Y Rubinstein, and Pierre L’Ecuyer.
\newblock The cross-entropy method for optimization.
\newblock In {\em Handbook of statistics}, volume~31, pages 35--59. Elsevier,
  2013.

\bibitem[\protect\citeauthoryear{Buckman \bgroup \em et al.\egroup
  }{2018}]{steve}
Jacob Buckman, Danijar Hafner, George Tucker, Eugene Brevdo, and Honglak Lee.
\newblock Sample-efficient reinforcement learning with stochastic ensemble
  value expansion.
\newblock In {\em Proceedings of the 32nd International Conference on Neural
  Information Processing Systems}, pages 8234--8244, 2018.

\bibitem[\protect\citeauthoryear{Chua \bgroup \em et al.\egroup }{2018}]{pets}
Kurtland Chua, Roberto Calandra, Rowan McAllister, and Sergey Levine.
\newblock Deep reinforcement learning in a handful of trials using
  probabilistic dynamics models.
\newblock {\em Advances in Neural Information Processing Systems}, 31, 2018.

\bibitem[\protect\citeauthoryear{Clavera \bgroup \em et al.\egroup
  }{2020}]{maac}
Ignasi Clavera, Yao Fu, and Pieter Abbeel.
\newblock Model-augmented actor-critic: Backpropagating through paths.
\newblock In {\em 8th International Conference on Learning Representations},
  2020.

\bibitem[\protect\citeauthoryear{Ebert \bgroup \em et al.\egroup
  }{2018}]{VisualForesight}
Frederik Ebert, Chelsea Finn, Sudeep Dasari, Annie Xie, Alex~X. Lee, and Sergey
  Levine.
\newblock Visual foresight: Model-based deep reinforcement learning for
  vision-based robotic control.
\newblock {\em CoRR}, abs/1812.00568, 2018.

\bibitem[\protect\citeauthoryear{Feinberg \bgroup \em et al.\egroup
  }{2018}]{mve}
Vladimir Feinberg, Alvin Wan, Ion Stoica, Michael~I. Jordan, Joseph~E.
  Gonzalez, and Sergey Levine.
\newblock Model-based value estimation for efficient model-free reinforcement
  learning.
\newblock {\em CoRR}, abs/1803.00101, 2018.

\bibitem[\protect\citeauthoryear{Fujimoto \bgroup \em et al.\egroup
  }{2018}]{td3}
Scott Fujimoto, Herke Hoof, and David Meger.
\newblock Addressing function approximation error in actor-critic methods.
\newblock In {\em International Conference on Machine Learning}, pages
  1587--1596. PMLR, 2018.

\bibitem[\protect\citeauthoryear{Haarnoja \bgroup \em et al.\egroup
  }{2018}]{sac}
Tuomas Haarnoja, Aurick Zhou, Pieter Abbeel, and Sergey Levine.
\newblock Soft actor-critic: Off-policy maximum entropy deep reinforcement
  learning with a stochastic actor.
\newblock In {\em International conference on machine learning}, pages
  1861--1870. PMLR, 2018.

\bibitem[\protect\citeauthoryear{Heess \bgroup \em et al.\egroup }{2015}]{svg}
Nicolas Heess, Gregory Wayne, David Silver, Timothy Lillicrap, Tom Erez, and
  Yuval Tassa.
\newblock Learning continuous control policies by stochastic value gradients.
\newblock {\em Advances in Neural Information Processing Systems},
  28:2944--2952, 2015.

\bibitem[\protect\citeauthoryear{Hu \bgroup \em et al.\egroup }{2021}]{gem}
Hao Hu, Jianing Ye, Guangxiang Zhu, Zhizhou Ren, and Chongjie Zhang.
\newblock Generalizable episodic memory for deep reinforcement learning.
\newblock In {\em International Conference on Machine Learning}, pages
  4380--4390. PMLR, 2021.

\bibitem[\protect\citeauthoryear{Janner \bgroup \em et al.\egroup
  }{2019}]{mbpo}
Michael Janner, Justin Fu, Marvin Zhang, and Sergey Levine.
\newblock When to trust your model: Model-based policy optimization.
\newblock {\em Advances in Neural Information Processing Systems},
  32:12519--12530, 2019.

\bibitem[\protect\citeauthoryear{Jia \bgroup \em et al.\egroup }{2021}]{emc-ac}
Ruonan Jia, Qingming Li, Wenzhen Huang, Junge Zhang, and Xiu Li.
\newblock Consistency regularization for ensemble model based reinforcement
  learning.
\newblock In {\em Trends in Artificial Intelligence: 18th Pacific Rim
  International Conference on Artificial Intelligence, Proceedings, Part III
  18}, pages 3--16. Springer, 2021.

\bibitem[\protect\citeauthoryear{Kurutach \bgroup \em et al.\egroup
  }{2018}]{me-trpo}
Thanard Kurutach, Ignasi Clavera, Yan Duan, Aviv Tamar, and Pieter Abbeel.
\newblock Model-ensemble trust-region policy optimization.
\newblock In {\em International Conference on Learning Representations}, 2018.

\bibitem[\protect\citeauthoryear{Levine and Abbeel}{2014}]{gps}
Sergey Levine and Pieter Abbeel.
\newblock Learning neural network policies with guided policy search under
  unknown dynamics.
\newblock In {\em NIPS}, volume~27, pages 1071--1079. Citeseer, 2014.

\bibitem[\protect\citeauthoryear{Levine and Koltun}{2013}]{gps2}
Sergey Levine and Vladlen Koltun.
\newblock Guided policy search.
\newblock In {\em International conference on machine learning}, pages 1--9.
  PMLR, 2013.

\bibitem[\protect\citeauthoryear{Lillicrap \bgroup \em et al.\egroup
  }{2016}]{ddpg}
Timothy~P Lillicrap, Jonathan~J Hunt, Alexander Pritzel, Nicolas Heess, Tom
  Erez, Yuval Tassa, David Silver, and Daan Wierstra.
\newblock Continuous control with deep reinforcement learning.
\newblock In {\em ICLR (Poster)}, 2016.

\bibitem[\protect\citeauthoryear{Luo \bgroup \em et al.\egroup }{2019}]{slbo}
Yuping Luo, Huazhe Xu, Yuanzhi Li, Yuandong Tian, Trevor Darrell, and Tengyu
  Ma.
\newblock Algorithmic framework for model-based deep reinforcement learning
  with theoretical guarantees.
\newblock In {\em International Conference on Learning Representations}, 2019.

\bibitem[\protect\citeauthoryear{Mnih \bgroup \em et al.\egroup }{2013}]{dqn}
Volodymyr Mnih, Koray Kavukcuoglu, David Silver, Alex Graves, Ioannis
  Antonoglou, Daan Wierstra, and Martin Riedmiller.
\newblock Playing atari with deep reinforcement learning.
\newblock {\em arXiv preprint arXiv:1312.5602}, 2013.

\bibitem[\protect\citeauthoryear{Pan \bgroup \em et al.\egroup }{2020}]{m2ac}
Feiyang Pan, Jia He, Dandan Tu, and Qing He.
\newblock Trust the model when it is confident: Masked model-based
  actor-critic.
\newblock {\em Advances in neural information processing systems},
  33:10537--10546, 2020.

\bibitem[\protect\citeauthoryear{Rybkin \bgroup \em et al.\egroup
  }{2021}]{latco}
Oleh Rybkin, Chuning Zhu, Anusha Nagabandi, Kostas Daniilidis, Igor Mordatch,
  and Sergey Levine.
\newblock Model-based reinforcement learning via latent-space collocation.
\newblock In {\em International Conference on Machine Learning}, pages
  9190--9201. PMLR, 2021.

\bibitem[\protect\citeauthoryear{Schulman \bgroup \em et al.\egroup
  }{2017}]{ppo}
John Schulman, Filip Wolski, Prafulla Dhariwal, Alec Radford, and Oleg Klimov.
\newblock Proximal policy optimization algorithms.
\newblock {\em arXiv preprint arXiv:1707.06347}, 2017.

\bibitem[\protect\citeauthoryear{Tassa \bgroup \em et al.\egroup }{2012}]{mpc}
Yuval Tassa, Tom Erez, and Emanuel Todorov.
\newblock Synthesis and stabilization of complex behaviors through online
  trajectory optimization.
\newblock In {\em 2012 IEEE/RSJ International Conference on Intelligent Robots
  and Systems}, pages 4906--4913. IEEE, 2012.

\bibitem[\protect\citeauthoryear{Todorov \bgroup \em et al.\egroup
  }{2012}]{mujoco}
Emanuel Todorov, Tom Erez, and Yuval Tassa.
\newblock Mujoco: A physics engine for model-based control.
\newblock In {\em 2012 IEEE/RSJ International Conference on Intelligent Robots
  and Systems}, pages 5026--5033. IEEE, 2012.

\bibitem[\protect\citeauthoryear{Voelcker \bgroup \em et al.\egroup
  }{2022}]{vagram}
Claas~A Voelcker, Victor Liao, Animesh Garg, and Amir-massoud Farahmand.
\newblock Value gradient weighted model-based reinforcement learning.
\newblock In {\em International Conference on Learning Representations}, 2022.

\bibitem[\protect\citeauthoryear{Wang and Ba}{2019}]{poplin}
Tingwu Wang and Jimmy Ba.
\newblock Exploring model-based planning with policy networks.
\newblock In {\em International Conference on Learning Representations}, 2019.

\bibitem[\protect\citeauthoryear{Wang \bgroup \em et al.\egroup }{2023}]{pdml}
Xiyao Wang, Wichayaporn Wongkamjan, Ruonan Jia, and Furong Huang.
\newblock Live in the moment: Learning dynamics model adapted to evolving
  policy.
\newblock In {\em Proceedings of the 40th International Conference on Machine
  Learning}, volume 202 of {\em Proceedings of Machine Learning Research},
  pages 36470--36493. PMLR, 23--29 Jul 2023.

\bibitem[\protect\citeauthoryear{Zhang \bgroup \em et al.\egroup
  }{2022}]{gobigger}
Ming Zhang, Shenghan Zhang, Zhenjie Yang, Lekai Chen, Jinliang Zheng, Chao
  Yang, Chuming Li, Hang Zhou, Yazhe Niu, and Yu~Liu.
\newblock Gobigger: A scalable platform for cooperative-competitive multi-agent
  interactive simulation.
\newblock In {\em The Eleventh International Conference on Learning
  Representations}, 2022.

\bibitem[\protect\citeauthoryear{Zhou \bgroup \em et al.\egroup }{2022}]{taec}
Tong Zhou, Letian Wang, Ruobing Chen, Wenshuo Wang, and Yu~Liu.
\newblock Accelerating reinforcement learning for autonomous driving using
  task-agnostic and ego-centric motion skills.
\newblock {\em arXiv preprint arXiv:2209.12072}, 2022.

\end{thebibliography}

\newpage
\appendix
\onecolumn
\centerline{\textbf{\LARGE{Appendix: Theoretically Guaranteed Policy Improvement}}}
\vspace{10pt}
\centerline{\textbf{\LARGE{Distilled from Model-Based Planning}}}
~\\ 

\subsection*{A. Proof of Lemma and Theorem}

In this section, we provide proofs for bounds presented in the main paper.

\paragraph{\cref{lem:improve} (Policy Improvement).}
\label{apx:improve}
\textit{
Let $\pi_{\boldsymbol{s}_t}^H$ be the optimizer of the optimization objective of \cref{eq:objective}. 
When the new policy $\pi_{new}(\cdot|\boldsymbol{s}_t) = \pi_{\boldsymbol{s}_t}^H(\cdot|\boldsymbol{s}_t)$,
$V^{\pi_{new}}(\boldsymbol{s}_t) \geq V^{\pi_{old}}(\boldsymbol{s}_t)$ for all $\boldsymbol{s}_t \in S $.
}

\begin{proof} 
Before the proof, we need to show that
\begin{align}
\label{eq:v_old_new}
V^{\pi_{old}}(\boldsymbol{s}_t) \leq J_{\boldsymbol{s}_t}^H(\pi_{\boldsymbol{s}_t}^H),
\end{align}
because $\pi_{\boldsymbol{s}_t}^H$ is the optimal solution and $V^{\pi_{old}}(\boldsymbol{s}_t)=J_{\boldsymbol{s}_t}^H(\pi_{old})\leq J_{\boldsymbol{s}_t}^H(\pi_{\boldsymbol{s}_t}^H)$.

Next, we will prove that
\begin{align}
\label{eq:v_new_new}
J_{\boldsymbol{s}_t}^H(\pi_{\boldsymbol{s}_t}^H)\leq \mathbb{E}_{\boldsymbol{a}_{t} \sim \pi_{\boldsymbol{s}_t}^H} \left[ r^{\pi_{\boldsymbol{s}_t}^H}(\boldsymbol{s}_{t},\boldsymbol{a}_{t}) + \gamma \cdot J_{\boldsymbol{s}_{t+1}}^H(\pi_{\boldsymbol{s}_{t+1}}^H) \right],
\end{align}
which follows 

\begin{equation*}
\begin{aligned}
J_{\boldsymbol{s}_t}^H(\pi_{\boldsymbol{s}_t}^H)
&=
\mathbb{E}_{\boldsymbol{a}_{t:t+H-1} \sim \pi_{\boldsymbol{s}_t}^H} \left[
    r^{\pi_{\boldsymbol{s}_t}^H}(\boldsymbol{s}_{t},\boldsymbol{a}_{t}) + \cdots
    + 
    \gamma^{H-1}
    \cdot
    r^{\pi_{\boldsymbol{s}_t}^H}(\boldsymbol{s}_{t+H-1},\boldsymbol{a}_{t+H-1})
    + 
    \gamma^H
    \cdot
    V^{\pi_{old}}(\boldsymbol{s}_{t+H}) 
\right]\\
&=
\mathbb{E}_{\boldsymbol{a}_{t:t+H-1} \sim \pi_{\boldsymbol{s}_t}^H,
\boldsymbol{a}_{t+H} \sim \pi_{old}} \Big[  
    r^{\pi_{\boldsymbol{s}_t}^H}(\boldsymbol{s}_{t},\boldsymbol{a}_{t}) + \cdots
    + 
    \gamma^{H-1}
    \cdot
    r^{\pi_{\boldsymbol{s}_t}^H}(\boldsymbol{s}_{t+H-1},\boldsymbol{a}_{t+H-1}) \\
    &\phantom{=\;\;}+
    \gamma^{H}
    \cdot
    r^{\pi_{old}}(\boldsymbol{s}_{t+H},\boldsymbol{a}_{t+H}) 
    + 
    \gamma^{H+1}
    \cdot
    V^{\pi_{old}}(\boldsymbol{s}_{t+H+1}) 
\Big]\\
&=
\mathbb{E}_{
\boldsymbol{a}_{t} \sim \pi_{\boldsymbol{s}_t}^H,
\boldsymbol{a}_{t+1:t+H-1} \sim \pi_{\boldsymbol{s}_t}^H,
\boldsymbol{a}_{t+H} \sim \pi_{old}
} \Big[
    r^{\pi_{\boldsymbol{s}_t}^H}(\boldsymbol{s}_{t},\boldsymbol{a}_{t}) + 
    \gamma
    \cdot
    \left[ \right.
    r^{\pi_{\boldsymbol{s}_{t}}^H}(\boldsymbol{s}_{t+1},\boldsymbol{a}_{t+1}) + \cdots
    + 
    \gamma^{H-2}
    \cdot
    r^{\pi_{\boldsymbol{s}_t}^H}(\boldsymbol{s}_{t+H-1},\boldsymbol{a}_{t+H-1}) \\
    &\phantom{=\;\;}+
    \gamma^{H-1}
    \cdot
    r^{\pi_{old}}(\boldsymbol{s}_{t+H},\boldsymbol{a}_{t+H}) 
    + 
    \gamma^{H}
    \cdot
    V^{\pi_{old}}(\boldsymbol{s}_{t+H+1}) 
    \left.\right]
\Big]\\
&\leq
\mathbb{E}_{
\boldsymbol{a}_{t} \sim \pi_{\boldsymbol{s}_t}^H,
\boldsymbol{a}_{t+1:t+H} \sim \pi_{\boldsymbol{s}_{t+1}}^H
} \Big[
    r^{\pi_{\boldsymbol{s}_t}^H}(\boldsymbol{s}_{t},\boldsymbol{a}_{t}) + 
    \gamma
    \cdot
    \left[ \right.
    r^{\pi_{\boldsymbol{s}_{t+1}}^H}(\boldsymbol{s}_{t+1},\boldsymbol{a}_{t+1}) + \cdots 
    + 
    \gamma^{H-1}
    \cdot
    r^{\pi_{\boldsymbol{s}_{t+1}}^H}(\boldsymbol{s}_{t+H-1},\boldsymbol{a}_{t+H-1}) \\
    &\phantom{=\;\;}+ 
    \gamma^{H}
    \cdot
    V^{\pi_{old}}(\boldsymbol{s}_{t+H+1}) 
    \left.\right]
\Big]\\
&=\mathbb{E}_{\boldsymbol{a}_{t} \sim \pi_{\boldsymbol{s}_t}^H} \left[ r^{\pi_{\boldsymbol{s}_t}^H}(\boldsymbol{s}_{t},\boldsymbol{a}_{t}) + \gamma \cdot J_{\boldsymbol{s}_{t+1}}^H(\pi_{\boldsymbol{s}_{t+1}}^H) \right].
\end{aligned}
\end{equation*}

We finish the proof by applying \cref{eq:v_old_new} and iteratively applying \cref{eq:v_new_new}:

\begin{align*}
 V^{\pi_{old}}(\boldsymbol{s}_t) &\leq J_{\boldsymbol{s}_t}^H(\pi_{\boldsymbol{s}_t}^H)\\
 &\leq 
 \mathbb{E}_{\boldsymbol{a}_{t} \sim \pi_{\boldsymbol{s}_t}^H} \left[ r^{\pi_{\boldsymbol{s}_t}^H}(\boldsymbol{s}_{t},\boldsymbol{a}_{t}) 
 + \gamma \cdot J_{\boldsymbol{s}_{t+1}}^H(\pi_{\boldsymbol{s}_{t+1}}^H) \right]
 \\
 &\leq 
 \mathbb{E}_{\boldsymbol{a}_{t} \sim \pi_{\boldsymbol{s}_t}^H,
 \boldsymbol{a}_{t+1} \sim \pi_{\boldsymbol{s}_{t+1}}^H} \left[ r^{\pi_{\boldsymbol{s}_t}^H}(\boldsymbol{s}_{t},\boldsymbol{a}_{t}) 
 + \gamma \cdot r^{\pi_{\boldsymbol{s}_{t+1}}^H}(\boldsymbol{s}_{t+1},\boldsymbol{a}_{t+1})
 \right]\\
 &\vdots\\
 &\leq 
 \mathbb{E}_{\boldsymbol{a}_{t} \sim \pi_{new}} \Big[ r^{\pi_{new}}(\boldsymbol{s}_{t},\boldsymbol{a}_{t}) +...
 \Big]\\
 &= V^{\pi_{new}}(\boldsymbol{s}_t).
\end{align*}
\end{proof}

\paragraph{\cref{thm:policy_conv} (Policy Convergence).}
\label{apx:policy_conv}
\textit{
Let $\pi_0$ be any initial policy. Assuming $|A|<\infty$, if the policy evaluation in \cref{eq:sac_evaluation} and the policy improvement with the objective in \cref{eq:objective} are alternatively carried out, $\pi_0$ converges to a policy $\pi_*$, with $V^{\pi_{*}}(\boldsymbol{s}_t) \geq V^{\pi}(\boldsymbol{s}_t)$ for any $\boldsymbol{s}_t \in S $.
}
\begin{proof}
First, let $\pi_{i}$ be the policy at the $i$-th iteration. Because $V^{\pi_{i}}(\boldsymbol{s}_t)$ monotonically increases with $i$ and is bounded, the sequence $\pi_{i}$ converges to some $\pi_{*}$.

We will next prove that, when the old policy $\pi_{old}=\pi_{*}$, $V^{\pi_{*}}(\boldsymbol{s}_t) = J_{\boldsymbol{s}_t}^H(\pi_{\boldsymbol{s}_t}^H)$. First, because $\pi_{\boldsymbol{s}_t}^H$ is the optimal solution of $J_{\boldsymbol{s}_t}^H(\pi_{\boldsymbol{s}_t}^H)$, as shown in the proof of \cref{lem:improve}, $V^{\pi_{*}}(\boldsymbol{s}_t)=V^{\pi_{old}}(\boldsymbol{s}_t) \leq J_{\boldsymbol{s}_t}^H(\pi_{\boldsymbol{s}_t}^H)$. Secondly, because $\pi_{*}$ is the fixed point, $\pi_{new}=\pi_{old}=\pi_{*}$ and $V^{\pi_{*}}(\boldsymbol{s}_t)=V^{\pi_{new}}(\boldsymbol{s}_t) \geq J_{\boldsymbol{s}_t}^H(\pi_{\boldsymbol{s}_t}^H)$, which completes the proof.

Finally, let $\pi$ be any other policy with $\pi \neq \pi_{*}$. We have $V^{\pi_{*}}(\boldsymbol{s}_t)=J_{\boldsymbol{s}_t}^H(\pi_{\boldsymbol{s}_t}^H) \geq J_{\boldsymbol{s}_t}^H(\pi)$ and expand the inequality as:

\begin{align*}
V^{\pi_{*}}(\boldsymbol{s}_t)
&\geq 
J_{\boldsymbol{s}_t}^H(\pi) 
=
\mathbb{E}_{
\boldsymbol{a}_{t:t+H-1} \sim \pi
}
 \Big[ 
    r^{\pi}(\boldsymbol{s}_{t},\boldsymbol{a}_{t}) + 
    \cdots
    + 
    \gamma^{H-1}
    \cdot
    r^{\pi}(\boldsymbol{s}_{t+H-1},\boldsymbol{a}_{t+H-1}) 
    + 
    \gamma^{H}
    \cdot
    V^{\pi_{*}}(\boldsymbol{s}_{t+H}) 
\Big]\\
&\geq
\mathbb{E}_{
\boldsymbol{a}_{t:t+2H-1} \sim \pi
}
 \Big[ 
    r^{\pi}(\boldsymbol{s}_{t},\boldsymbol{a}_{t}) + 
    \cdots
    + 
    \gamma^{2H-1}
    \cdot
    r^{\pi}(\boldsymbol{s}_{t+2H-1},\boldsymbol{a}_{t+2H-1}) 
    + 
    \gamma^{2H}
    \cdot
    V^{\pi_{*}}(\boldsymbol{s}_{t+2H}) 
\Big]\\
&\vdots\\
&\geq
\mathbb{E}_{
\boldsymbol{a}_{t:\infty} \sim \pi
}
 \Big[ 
    r^{\pi}(\boldsymbol{s}_{t},\boldsymbol{a}_{t}) + 
    \cdots 
\Big]\\
&=V^{\pi}(\boldsymbol{s}_t).
\end{align*}

\end{proof}

\paragraph{\cref{lem:monotone} (Policy Monotone with Horizon).}
\label{apx:h_monotone}
\textit{
Let $\pi_{\boldsymbol{s}_t}^H$ and $\pi_{\boldsymbol{s}_t}^{H+1}$ be the optimizer of $J_{\boldsymbol{s}_t}^H(\pi)$ and $J_{\boldsymbol{s}_t}^{H+1}(\pi)$. Then $J_{\boldsymbol{s}_t}^{H+1}(\pi_{\boldsymbol{s}_t}^{H+1}) \geq J_{\boldsymbol{s}_t}^{H}(\pi_{\boldsymbol{s}_t}^{H})$ for all $H\geq 1$ and $\boldsymbol{s}_t \in S$.
}
\begin{proof}
\begin{align*}
J_{\boldsymbol{s}_t}^H(\pi_{\boldsymbol{s}_t}^H)
&=
\mathbb{E}_{\boldsymbol{a}_{t:t+H-1} \sim \pi_{\boldsymbol{s}_t}^H} \left[
    r^{\pi_{\boldsymbol{s}_t}^H}(\boldsymbol{s}_{t},\boldsymbol{a}_{t}) + \cdots
    + 
    \gamma^{H-1}
    \cdot
    r^{\pi_{\boldsymbol{s}_t}^H}(\boldsymbol{s}_{t+H-1},\boldsymbol{a}_{t+H-1}) 
    + 
    \gamma^H
    \cdot
    V^{\pi_{old}}(\boldsymbol{s}_{t+H}) 
\right]\\
&=
\mathbb{E}_{\boldsymbol{a}_{t:t+H-1} \sim \pi_{\boldsymbol{s}_t}^H,
\boldsymbol{a}_{t+H} \sim \pi_{old}} \Big[
    r^{\pi_{\boldsymbol{s}_t}^H}(\boldsymbol{s}_{t},\boldsymbol{a}_{t}) + \cdots
    + 
    \gamma^{H-1}
    \cdot
    r^{\pi_{\boldsymbol{s}_t}^H}(\boldsymbol{s}_{t+H-1},\boldsymbol{a}_{t+H-1}) \\
    &\phantom{=\;\;}+ 
    \gamma^{H}
    \cdot
    r^{\pi_{old}}(\boldsymbol{s}_{t+H},\boldsymbol{a}_{t+H}) 
    + 
    \gamma^{H+1}
    \cdot
    V^{\pi_{old}}(\boldsymbol{s}_{t+H+1}) 
\Big]\\
&\leq
\mathbb{E}_{\boldsymbol{a}_{t:t+H} \sim \pi_{\boldsymbol{s}_t}^{H+1}} \left[
    r^{\pi_{\boldsymbol{s}_t}^{H+1}}(\boldsymbol{s}_{t},\boldsymbol{a}_{t}) + \cdots
    + 
    \gamma^{H}
    \cdot
    r^{\pi_{\boldsymbol{s}_t}^{H+1}}(\boldsymbol{s}_{t+H},\boldsymbol{a}_{t+H}) 
    + 
    \gamma^{H+1}
    \cdot
    V^{\pi_{old}}(\boldsymbol{s}_{t+H+1}) 
\right]\\
&=
J_{\boldsymbol{s}_t}^{H+1}(\pi_{\boldsymbol{s}_t}^{H+1}).
\end{align*}
\end{proof}

\paragraph{\cref{thm:optim} (Policy Convergence with Horizon).}
\label{apx:optim}
\textit{
Let $\pi_{\boldsymbol{s}_t}^H$ be the optimal solution of $J_{\boldsymbol{s}_t}^H(\pi)$, and $\pi_{new}(\cdot|\boldsymbol{s}_t)=\pi_{\boldsymbol{s}_t}^H(\cdot|\boldsymbol{s}_t)$. $\pi_{*}$ denotes the optimal policy. As $H$ increases, $V^{\pi_{new}}$ and $J_{\boldsymbol{s}_t}^H(\pi_{\boldsymbol{s}_t}^H)$ converge to $V^{\pi_{*}}$ for all $\boldsymbol{s}_t \in S$. Specifically, $J_{\boldsymbol{s}_t}^H(\pi_{\boldsymbol{s}_t}^H) \geq V^{\pi_{*}}(\boldsymbol{s}_{t}) 
-    
\frac{\gamma^H
\cdot r^{max}}{1-\gamma}$ with $r^{max}$ the maximum of $r^{\pi}(\boldsymbol{s},\boldsymbol{a})$ over all $\pi$ and $(\boldsymbol{s},\boldsymbol{a})\in |\mathcal{S}|\times|\mathcal{A}|$.
}
\begin{proof}
We have show that $V^{\pi_{new}}\geq J_{\boldsymbol{s}_t}^H(\pi_{\boldsymbol{s}_t}^H)$ in the proof of \cref{lem:improve}, hence we only need to prove that $J_{\boldsymbol{s}_t}^H(\pi_{\boldsymbol{s}_t}^H)$ converges to $V^{\pi_{*}}$. We start the proof with the fact that $J_{\boldsymbol{s}_t}^H(\pi_{\boldsymbol{s}_t}^H) \geq J_{\boldsymbol{s}_t}^H(\pi_{*})$ and expand this inequality as:
\begin{align*}
    J_{\boldsymbol{s}_t}^H(\pi_{\boldsymbol{s}_t}^H) &\geq
\mathbb{E}_{\boldsymbol{a}_{t:t+H-1} \sim \pi_{*}} \Big[
    r^{\pi_{*}}(\boldsymbol{s}_{t},\boldsymbol{a}_{t}) + \cdots 
    + 
    \gamma^{H-1}
    \cdot
    r^{\pi_{*}}(\boldsymbol{s}_{t+H-1},\boldsymbol{a}_{t+H-1}) 
    + 
    \gamma^H
    \cdot
    V^{\pi_{old}}(\boldsymbol{s}_{t+H}) 
\Big]\\
&=
\mathbb{E}_{\boldsymbol{a}_{t:t+H-1} \sim \pi_{*}} \Big[
    r^{\pi_{*}}(\boldsymbol{s}_{t},\boldsymbol{a}_{t}) + \cdots 
    + 
    \gamma^{H-1}
    \cdot
    r^{\pi_{*}}(\boldsymbol{s}_{t+H-1},\boldsymbol{a}_{t+H-1}) \\
    &\phantom{=\;\;}+ 
    \gamma^H
    \cdot
    V^{\pi_{*}}(\boldsymbol{s}_{t+H})
    +
    \gamma^H
    \cdot
    V^{\pi_{old}}(\boldsymbol{s}_{t+H})
    -
    \gamma^H
    \cdot
    V^{\pi_{*}}(\boldsymbol{s}_{t+H}) 
\Big]\\
&=
V^{\pi_{*}}(\boldsymbol{s}_{t}) 
-    
\gamma^H
\cdot 
\mathbb{E}_{\boldsymbol{a}_{t:t+H-1} \sim \pi_{*}} \Big[ 
    V^{\pi_{*}}(\boldsymbol{s}_{t+H}) -
    V^{\pi_{old}}(\boldsymbol{s}_{t+H})
\Big]\\
&\geq
V^{\pi_{*}}(\boldsymbol{s}_{t}) 
-    
\gamma^H
\cdot 
\Vert 
V^{\pi_{*}} -
    V^{\pi_{old}}
\Vert_{\infty},\\
&\geq
V^{\pi_{*}}(\boldsymbol{s}_{t}) 
-    
\frac{\gamma^H
\cdot r^{max}}{1-\gamma}.
\end{align*}
\end{proof}


\subsection*{B. Implementation}
\subsection*{B.1 Experiment Setup}

We implement MPDP based on the open-source platform DI-engine \footnote{https://github.com/opendilab/DI-engine}.
And \cref{tab:hyparam} provides the key hyperparameters in MPDP.
We follow the original implementations for all baseline algorithms with regard to the reward sum over 1000 steps.
We evaluate MPDP along with the baseline algorithms on six continuous control tasks provided in MuJoCo-v2 \cite{mujoco}.

\begin{table*}[ht]
  \centering
  \begin{tabular}{cc}
    \toprule
    Hyperparameter  & Value \\ 
    \midrule
    Ensemble size    & 7 \\
    Replay buffer size  & $10^6$  \\
    Batch size    & 256 \\
    Learning rate & $3 \cdot 10^{-4}$ \\
    Threshold $u_T$    & -5 \\
    Entropy coefficient $\alpha$    & 0.2 \\
    Regularization coefficient $\beta$    & 0.5 \\
    Maximum horizon $H_{max}$  & 25 \\
    Policy updates per environment step  & 20  \\
    Environment steps per model training   & 250 \\
    \bottomrule
  \end{tabular}
  \caption{Hyperparameter setup for MPDP.}
  \label{tab:hyparam}
\end{table*}

\subsection*{B.2 Experiment Environments}

We visualize the six continuous control tasks in MuJoCo-v2 including  InvertedPendulum, Hopper, HalfCheetah, Ant, Walker2d, and Humanoid, as shown in \cref{supp:envs}. 
The first task InvertedPendulum is designed to control the pole to keep balance, and the other five tasks aim to keep the agent moving forward without falling.

\begin{figure*}[!htbp]
\centering
\subfigure[InvertedPendulum]{
\includegraphics[width=4.3cm]{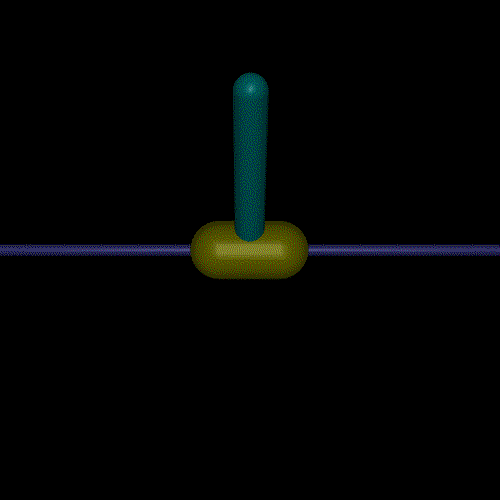}}
\subfigure[Hopper]{
\includegraphics[width=4.3cm]{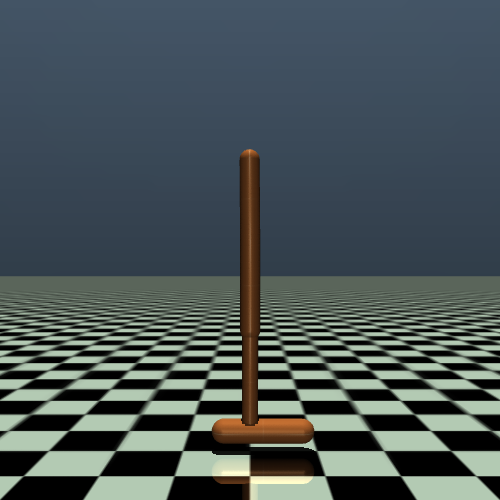}}
\subfigure[HalfCheetah]{
\includegraphics[width=4.3cm]{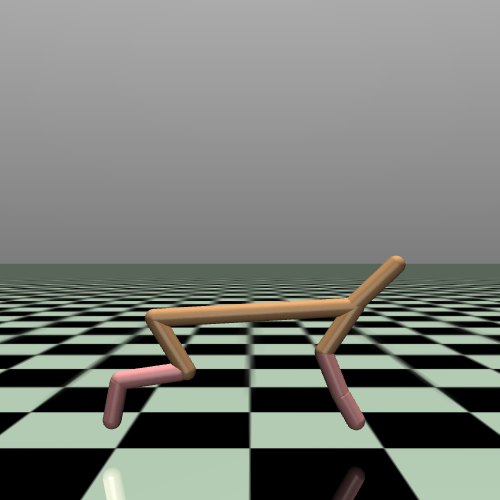}}
\subfigure[Ant]{
\includegraphics[width=4.3cm]{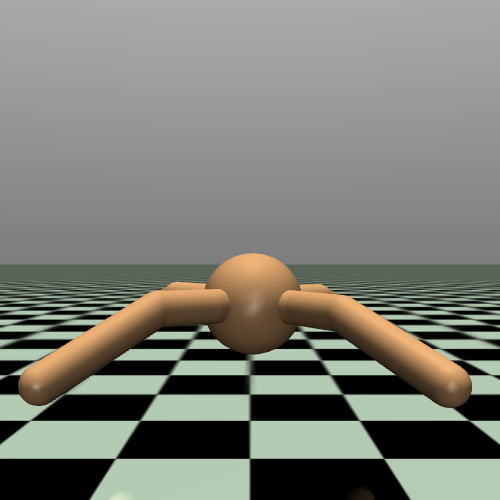}}
\subfigure[Walker2d]{
\includegraphics[width=4.3cm]{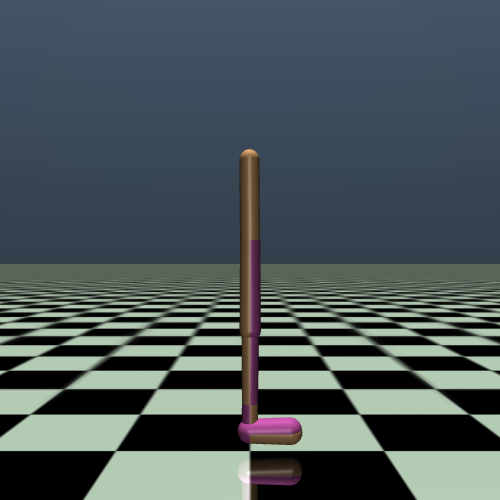}}
\subfigure[Humanoid]{
\includegraphics[width=4.3cm]{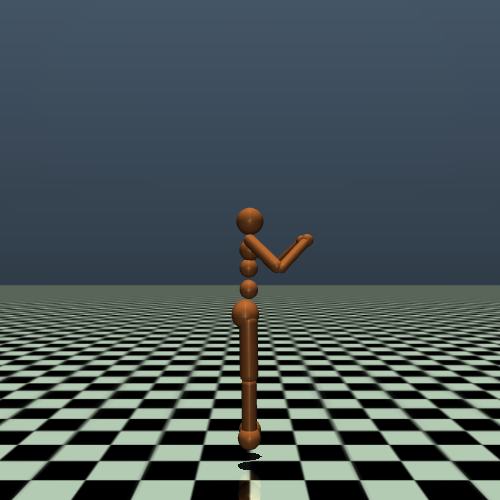}}
\caption{The screenshots of MuJoCo-v2 simulation environments used in our experiments.}
\label{supp:envs}
\end{figure*}

\end{document}